\pgfplotsset{compat=1.12}
\begin{document}

\title{Goal-Driven Query Answering for Existential Rules with Equality}

\author{
   Michael Benedikt \and Boris Motik \\ University of Oxford
   \And Efthymia Tsamoura \\ Alan Turing Institute \& University of Oxford
}

\maketitle

\begin{abstract}
Inspired by the magic sets for Datalog, we present a novel goal-driven approach
for answering queries over terminating existential rules with equality (aka
TGDs and EGDs). Our technique improves the performance of query answering by
pruning the consequences that are not relevant for the query. This is
challenging in our setting because equalities can potentially affect all
predicates in a dataset. We address this problem by combining the existing
singularization technique with two new ingredients: an algorithm for
identifying the rules relevant to a query and a new magic sets algorithm. We
show empirically that our technique can significantly improve the performance
of query answering, and that it can mean the difference between answering a
query in a few seconds or not being able to process the query at all.

\end{abstract}

\section{Introduction}

\emph{Existential rules with equality}, also known as \emph{tuple- and equality
generating dependencies} (TGDs and EGDs) or Datalog$^\pm$ rules, extend Datalog
by allowing rule heads to contain existential quantifiers and the equality
predicate $\equals$. Answering a conjunctive query $\predQ$ over a set of
existential rules $\Sigma$ and a base instance $B$ is key to dealing with
incomplete information in information systems \cite{Fagin2005a}. The problem is
undecidable in general, but many decidable cases are known
\cite{DBLP:journals/ai/BagetLMS11,DBLP:journals/semweb/KonigLMT15,DBLP:conf/ijcai/BagetBMR15,DBLP:conf/ijcai/GottlobMP15,DBLP:conf/kr/LeoneMTV12}.
Systems such as Llunatic \cite{llunatic}, RDFox
\cite{mnpho14parallel-materialisation-RDFox}, DLV$^\exists$
\cite{DBLP:conf/kr/LeoneMTV12}, ChaseFUN \cite{DBLP:conf/edbt/BonifatiIL17},
Ontop \cite{DBLP:journals/semweb/CalvaneseCKKLRR17}, and Graal
\cite{DBLP:conf/ruleml/BagetLMRS15} implement various query answering
techniques. One solution to this problem is to evaluate the query in a
\emph{universal model} of ${\Sigma \cup B}$, and a common and practically
relevant case is when a finite universal model can be computed using a
\emph{chase} procedure. Many chase variants have been proposed. Although
checking chase termination is undecidable for all variants
\cite{chaserevisited,marnette2009}, numerous sufficient \emph{acyclicity}
conditions
\cite{marnette2009,DBLP:conf/ijcai/KrotzschR11,ghkkmmw13acyclicity-journal}
guarantee termination of at least the oblivious Skolem chase; we call such
${\Sigma \cup B}$ \emph{terminating}.

Computing a universal model in full when only a specific query is to be
answered may be inefficient. We experimentally show that query answers often
depend only on a small subset of the universal model, particularly for queries
containing constants, so the chase may perform a lot of unnecessary work.
Moreover, universal models sometimes cannot be computed due to their size. In
such cases, \emph{goal-driven} query answering techniques, which take the query
into account, hold the key to efficient query answering.

One possibility, implemented in systems such as Ontop
\cite{DBLP:journals/semweb/CalvaneseCKKLRR17} and Graal
\cite{DBLP:conf/ruleml/BagetLMRS15}, is to \emph{rewrite} the relevant rules
into a new query that can be evaluated directly on the base instance. Rewriting
into first-order queries is possible for DL-lite
\cite{DBLP:journals/jar/CalvaneseGLLR07}, linear TGDs
\cite{DBLP:journals/ws/CaliGL12}, and sticky TGDs
\cite{DBLP:journals/tods/GottlobOP14,DBLP:conf/rr/CaliGP10}, among others.
Frontier-guarded \cite{bagetdatalogrewriting}, weakly-guarded
\cite{DBLP:conf/pods/GottlobRS14}, and Horn-$\mathcal{SHIQ}$
\cite{DBLP:conf/aaai/EiterOSTX12} rules can be rewritten into Datalog.
Rewriting approaches, however, cannot handle common properties that can be
handled via the chase, such as transitivity, and they typically support only
``innocuous'' equalities that do not affect query answers.

In Datalog and logic programming, the \emph{magic sets} algorithm
\cite{DBLP:conf/pods/BancilhonMSU86,DBLP:journals/jlp/BeeriR91} annotates the
rules with \emph{magic} atoms, which ensure that bottom-up evaluation of the
magic program simulates top-down query evaluation. This influential idea has
been adapted to disjunctive \cite{DBLP:journals/ai/AlvianoFGL12} and finitely
recursive \cite{DBLP:conf/lpnmr/CalimeriCIL09} programs, programs with
aggregates \cite{DBLP:conf/lpnmr/AlvianoGL11}, and \emph{Shy} existential rules
\cite{DBLP:conf/datalog/AlvianoLMTV12}. These approaches, however, do not
handle existential rules with equality.

In this paper we present what we believe to be the first goal-driven query
answering technique for terminating existential rules \emph{with} equality.
Given a set of rules $\Sigma$ and a query $\predQ$, we compute a logic program
$P$ such that, for each base instance $B$, the answers to $\predQ$ on ${\Sigma
\cup B}$ and ${P \cup B}$ coincide, but processing the latter is typically much
more efficient. Our approach combines existing techniques such as
\emph{singularization} \cite{marnette2009} with two new ingredients: a new
\emph{relevance analysis} algorithm that identifies irrelevant rules, and a new
magic sets variant that handles existential rules with equality. These two
techniques are complementary: the first one prunes rules whose consequences are
irrelevant to the query, and the second one prunes the irrelevant consequences
of the remaining rules. Since equalities can potentially affect any predicate,
both techniques are needed to efficiently identify the relevant equalities.

We have empirically evaluated our technique on a recent benchmark that includes
a diverse set of existential rules \cite{chasebench}. Our results show that
goal-driven query answering is generally more efficient than computing the
chase in full. In fact, our approach can mean the difference between success
and failure: even though the chase cannot be computed in several cases, we can
answer the relevant queries in a few seconds. We also show that relevance
analysis alone is very effective at eliminating irrelevant rules even without
equalities. Finally, we show that magic sets alone can be less efficient on
queries without constants, but it greatly benefits queries with constants. A
combination of both techniques usually provides the best performance.

All proofs and further experimental results are given in the
\iftoggle{withappendix}{appendix}{extended version \cite{extended-version}}.

\section{Preliminaries} \label{sec:preliminaries}

We use the standard first-order logic notions of \emph{variables},
\emph{constants}, \emph{function symbols}, \emph{predicates}, \emph{arity},
\emph{terms}, \emph{atoms}, and \emph{formulas}, and $\equals$ is the binary
\emph{equality} predicate. Atoms ${{\equals}(s,t)}$ are \emph{equational} and
are usually written as ${s \equals t}$, and all other atoms are
\emph{relational}. A \emph{fact} is a variable-free atom, an \emph{instance} is
a (possibly infinite) set of facts, and a \emph{base instance} is a finite,
function-free instance. We consider two notions of entailment: $\models$
interprets $\equals$ as an ``ordinary predicate'' without any special
semantics, whereas $\modelsEq$ interprets $\equals$ under the usual semantics
of equality without the \emph{unique name assumption} (UNA)---that is, distinct
constants can be derived equal. A theory $T$ \emph{satisfies UNA} if no two
distinct constants $a$ and $b$ exist such that ${T \modelsEq a \equals b}$. For
example, let ${\varphi = A(a) \wedge a \equals b}$. Then, ${\varphi \modelsEq
A(b)}$ and ${\varphi \modelsEq b \equals a}$, and $\varphi$ does not satisfy
UNA. In contrast, ${\varphi \not\models A(b)}$ and ${\varphi \not\models b
\equals a}$. We often abbreviate a tuple ${t_1, \dots, t_n}$ as ${\vec t}$, and
we often treat ${\vec t}$ as a set and write ${t_i \in \vec t}$.

A term $t$ \emph{occurs} in a term, atom, tuple, or set $X$ if $X$ contains $t$
possibly nested inside another term; $\vars{X}$ is the set of variables
occurring in $X$; and $X$ is \emph{ground} if ${\vars{X} = \emptyset}$. For
$\sigma$ a mapping of variables and/or constants to terms, $\sigma(X)$ replaces
each occurrence of a term $t$ in $X$ with $\sigma(t)$ if the latter is defined,
and $\sigma$ is a \emph{substitution} if its domain is finite and contains only
variables. For $\mu$ a mapping of ground terms to ground terms,
$\tapply{\mu}{X}$ replaces each occurrence of a term $t$ not nested in a
function symbol with $\mu(t)$ if the latter is defined. For example, let ${A =
R(f(x),g(a))}$; then, ${\sigma(A) = R(f(b),g(c))}$ for ${\sigma = \{ x \mapsto
b, a \mapsto c \}}$, and ${\tapply{\mu}{A} = R(f(x),h(d))}$ for ${\mu = \{ a
\mapsto b, g(a) \mapsto h(d) \}}$.

\emph{Existential rules} are logical implications of two forms: ${\forall \vec
x. [\lambda(\vec x) \rightarrow \exists \vec y. \rho(\vec x,\vec y)]}$ is a
\emph{tuple-generating dependency} (TGD), and ${\forall \vec x. [\lambda(\vec
x) \rightarrow t_1 \equals t_2]}$ is an \emph{equality-generating dependency}
(EGD), where ${\lambda(\vec x)}$ and ${\rho(\vec x,\vec y)}$ are conjunctions
of relational, function-free atoms with variables in ${\vec x}$ and ${\vec x
\cup \vec y}$, respectively, $t_1$ and $t_2$ are variables from ${\vec x}$ or
constants, and each variable in ${\vec x}$ occurs in ${\lambda(\vec x)}$.
Quantifiers ${\forall \vec x}$ are commonly omitted. Conjunction ${\lambda(\vec
x)}$ is the \emph{body} of a rule, and ${\rho(\vec x,\vec y)}$ and ${t_1
\equals t_2}$ are its \emph{head}. We assume that queries are defined using a
\emph{query predicate} $\predQ$ that does not occur in rule bodies or under
existential quantifiers. A tuple ${\vec a}$ of constants is an \emph{answer} to
$\predQ$ on a finite set of existential rules $\Sigma$ and a base instance $B$
iff ${\Sigma \cup B \modelsEq \predQ(\vec a)}$.

When treating $\equals$ as ``ordinary,'' we allow rule bodies to contain
equality atoms, and we can axiomatize the ``true'' semantics of $\equals$ for
$\Sigma$ as follows. Let $\Ref{\Sigma}$ and $\Cong{\Sigma}$ contain the
\emph{reflexivity} \eqref{eq:reflexivity} and \emph{congruence}
\eqref{eq:congruence} axioms, respectively, instantiated for each $n$-ary
predicate $R$ in $\Sigma$ distinct from $\equals$ and each ${1 \leq i \leq n}$.
Let $\SymTrans$ contain the \emph{symmetry} \eqref{eq:symmetry} and the
\emph{transitivity} \eqref{eq:transitivity} axioms. We assume that each base
instance contains only the predicates of $\Sigma$, since the equality axioms
are then determined only by $\Sigma$. Then, for each base instance $B$ and
tuple of constants ${\vec a}$, we have ${\Sigma \cup B \modelsEq \predQ(\vec
a)}$ if and only if ${\Sigma \cup \Ref{\Sigma} \cup \Cong{\Sigma} \cup
\SymTrans \cup B \models \predQ(\vec a)}$.
\begin{align}
    R(\dots, x_i, \dots)                            & \rightarrow x_i \equals x_i       \label{eq:reflexivity} \\
    R(\dots, x_i, \dots) \wedge  x_i \equals x_i'   & \rightarrow R(\dots, x'_i, \dots) \label{eq:congruence} \\
    y \equals x                                     & \rightarrow x \equals y           \label{eq:symmetry} \\
    x \equals y \wedge y \equals z                  & \rightarrow x \equals z           \label{eq:transitivity}
\end{align}

Our algorithms use logic programming, which we define next. A \emph{rule} $r$
has the form ${R(\vec t) \leftarrow R_1(\vec t_1) \wedge \dots \wedge R_n(\vec
t_n)}$, where $R(\vec t)$ and $R_i(\vec t_i)$ are atoms possibly containing
function symbols. Each variable in $r$ must occur in some ${\vec t_i}$. To
distinguish existential from logic programming rules, we use $\rightarrow$ for
the former and $\leftarrow$ for the latter. Conjunction ${\body{r} = R_1(\vec
t_1) \wedge \dots \wedge R_n(\vec t_n)}$ is the \emph{body} of $r$ and we often
treat it as a set, and atom ${\head{r} = R(\vec t)}$ is the \emph{head} of $r$.
Predicate $\equals$ is always ordinary in logic programming, so $R$ and $R_i$
can be $\equals$. A \emph{(logic) program} $P$ is a finite set of rules, and it
is interpreted in first-order logic as usual. Again, we assume that a query in
$P$ is defined using the predicate $\predQ$ not occurring in rule bodies. For
$I$ an instance, $\consOf{P}{I}$ is the result of extending $I$ with
$\sigma(\head{r})$ for each rule ${r \in P}$ and substitution $\sigma$ such
that ${\sigma(\body{r}) \subseteq I}$. Finally, for $B$ a base instance, we
inductively define a sequence of interpretations where ${I_0 = B}$ and ${I_i =
\consOf{P}{I_{i-1}}}$ for ${i > 0}$; then, the \emph{least fixpoint} of $P$ on
$B$ is ${\fixpoint{P}{B} = \bigcup_{i \geq 0} I_i}$. It is well known that ${P
\cup B \models F}$ iff ${F \in \fixpoint{P}{B}}$ holds for each fact $F$.

Our algorithms reduce query answering over existential rules to reasoning in
logic programming. We eliminate existential quantifiers by computing the
\emph{Skolemization} $\sk{\Sigma}$ of a set $\Sigma$ of existential rules. Set
$\sk{\Sigma}$ contains each EGD of $\Sigma$ as a logic programming rule and,
for each TGD ${\tau = \lambda(\vec x) \rightarrow \exists \vec y.\rho(\vec
x,\vec y) \in \Sigma}$ and each ${R(\vec t) \in \rho(\vec x,\vec y)}$, set
$\sk{\Sigma}$ contains the rule ${\sigma(R(\vec t)) \leftarrow \lambda(\vec
x)}$ where $\sigma$ is a substitution mapping each variable ${y \in \vec y}$ to
${f_{\tau,y}(\vec x')}$ for ${\vec x' = \vars{\lambda(\vec x)} \cap
\vars{\rho(\vec x, \vec y)}}$ and $f_{\tau,y}$ a fresh function symbol unique
for $\tau$ and $y$. Let ${P = \sk{\Sigma}}$; if $\Sigma$ and $B$ do not contain
$\equals$, then for each predicate $R$ and tuple ${\vec a}$ of constants, we
have ${\Sigma \cup B \models R(\vec a)}$ iff ${P \cup B \models R(\vec a)}$. If
$\Sigma$ or $B$ contains $\equals$, we can axiomatize equality using axioms
$\Ref{P}$, $\Cong{P}$, and $\SymTrans$ defined analogously to
\eqref{eq:reflexivity}--\eqref{eq:transitivity}; then, ${P' = P \cup \Ref{P}
\cup \Cong{P} \cup \SymTrans}$ captures the intended semantics of $\equals$,
and ${\Sigma \cup B \modelsEq R(\vec a)}$ iff ${P' \cup B \models R(\vec a)}$.

Now let $P$ and $P'$ be as in the previous paragraph. We could answer queries
over such $P'$ by computing $\fixpoint{P'}{B}$ and evaluating $\predQ$ on it,
but this is inefficient even when $P$ is just a Datalog program
\cite{mnph15owl-sameAs-rewriting} since firing congruence rules can be
prohibitively expensive. The \emph{chase for logic programs} offers a more
efficient method for reasoning with ${P' \cup B}$ by efficiently computing a
representation of $\fixpoint{P'}{B}$. It is applicable if $P$ does not contain
constants, function symbols, or $\equals$ in the rule bodies. The algorithm
constructs a sequence of pairs ${\langle I_i, \mu_i \rangle}$, ${i \geq 0}$,
where $I_i$ is an instance and $\mu_i$ maps ground terms to ground terms. The
algorithm initializes $I_0$ to a normalized version of $B$ where all constants
reachable by $\equals$ in $B$ are replaced by a representative, and it records
these replacements in $\mu_0$. For each ${i > 0}$, the chase selects a rule ${r
\in P}$ and a substitution $\sigma$ with ${\sigma(\body{r}) \subseteq I_{i-1}}$
and (i)~if $\sigma(\head{r})$ is of the form ${s \equals t}$ and ${s \neq t}$,
then one term, say $s$, is selected as the \emph{representative}, and $I_i$ and
$\mu_i$ are obtained from $I_{i-1}$ and $\mu_{i-1}$ by replacing $t$ with $s$
and setting ${\mu_i(t) = s}$; and (ii)~if $\sigma(\head{r})$ does not contain
$\equals$ and ${\sigma(\head{r}) \not \in I_{i-1}}$, then ${\mu_i = \mu_{i-1}}$
and ${I_i = I_{i-1} \cup \{ \tapply{\mu_i}{\sigma(\head{r})} \}}$. The
computation proceeds until no rule is applicable and then returns the final
pair ${\langle I_n, \mu_n \rangle}$. If the representatives are always chosen
as smallest in an arbitrary, but fixed well-founded order on ground terms, then
the result is unique for $P$ and $B$ and it is called the \emph{chase} of $P$
on $B$, written $\chase{P}{B}$. The following properties of the chase are well
known \cite{chasebench}.

\begin{proposition}\label{prop:chasecorrect}
    For each program $P$, base instance $B$, ${\chase{P}{B} = \langle I, \mu
    \rangle}$, and ${P' = P \cup \Ref{P} \cup \Cong{P} \cup \SymTrans}$,
    (i)~${\mu(t_1) = \mu(t_2)}$ if and only if ${P' \cup B \models t_1 \equals
    t_2}$, for all ground terms $t_1$ and $t_2$, and (ii)~${P' \cup B \models
    R(\vec t)}$ if and only if ${R(\tapply{\mu}{\vec t}) \in I}$, for each
    ground relational atom ${R(\vec t)}$.
\end{proposition}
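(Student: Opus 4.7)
The plan is to prove both directions of (i) and (ii) by establishing the chase as a sound and complete procedure with respect to $P'$. I would structure the argument as a soundness-completeness pair, where soundness is handled by induction on chase steps, and completeness is handled by constructing a model of $P' \cup B$ from the final pair $\langle I, \mu \rangle$.

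For soundness, I would prove by induction on $i$ that two invariants hold for every intermediate pair $\langle I_i, \mu_i \rangle$: (a) whenever $\mu_i(t_1) = \mu_i(t_2)$, we have $P' \cup B \models t_1 \equals t_2$; and (b) whenever $R(\vec t) \in I_i$, we have $P' \cup B \models R(\vec s)$ for any $\vec s$ with $\tapply{\mu_i}{\vec s} = R(\vec t)$ (up to the relevant preimage structure). The base case follows because $I_0$ is obtained from $B$ by collapsing $\equals$-connected constants and recording the collapse in $\mu_0$, both of which are justified by symmetry and transitivity. In the inductive step, case (i) (equality derivation) is justified by appeal to $\Cong{P}$: rewriting each occurrence of $t$ by $s$ in every fact of $I_{i-1}$ mirrors repeated applications of the congruence axioms once $s \equals t$ has been derived by the rule of $P$. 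Case (ii) (relational derivation) is immediate since $\sigma(\body{r}) \subseteq I_{i-1}$ directly fires $r \in P \subseteq P'$.

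For completeness, I would construct a Herbrand-style interpretation $\mathcal{I}$ whose domain is the set of representatives occurring in $I$, interpreting each constant $c$ as $\mu(c)$ and each function symbol by its syntactic action followed by $\mu$-normalization. I would verify that $\mathcal{I}$ satisfies every rule in $P'$: the reflexivity, symmetry, transitivity, and congruence axioms are satisfied because $\mu$ is idempotent on representatives and $I$ is closed under the replacement operation; the rules of $P$ are satisfied because the chase halts only when no rule is applicable. Moreover $\mathcal{I}$ satisfies $B$ because $I$ is obtained from $B$ by $\mu$-normalization. Hence whenever $P' \cup B \models t_1 \equals t_2$, the interpretation forces $\mu(t_1) = \mu(t_2)$ (yielding the $\Leftarrow$ direction of (i)), and whenever $P' \cup B \models R(\vec t)$, the interpretation forces $R(\tapply{\mu}{\vec t}) \in I$ (yielding the $\Leftarrow$ direction of (ii)).

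The main obstacle I anticipate is the interaction between $\mu$, which rewrites ground terms only when they are not nested inside a function symbol, and the congruence axioms in $\Cong{P}$, which must propagate equalities through arbitrary term positions. Reconciling these requires carefully tracking that, at the end of the chase, every function term appearing in $I$ already has its immediate arguments in representative form, so that $\mu$ applied top-down is equivalent to the closure under $\Cong{P}$ applied to arbitrary depths. A secondary subtlety is the well-foundedness of the representative ordering, which is needed to ensure the construction of $\mathcal{I}$ is well-defined and that no infinite descending chain of rewrites appears; this is where the assumed fixed order on ground terms is essential. Once these two points are handled, both directions of (i) and (ii) fall out of the soundness and completeness statements.
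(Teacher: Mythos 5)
The paper does not actually prove Proposition~\ref{prop:chasecorrect}: it states it as a well-known property of the chase for logic programs and cites \cite{chasebench}, so there is no in-paper argument to compare against. Your two-part strategy---soundness by induction on the chase sequence, completeness by exhibiting ${\langle I, \mu \rangle}$ as a canonical model of ${P' \cup B}$ in which every ground term denotes its representative---is the standard argument for this result and is correct in outline. A few points would need tightening in a full write-up. First, invariant (b) has a type error (${\tapply{\mu_i}{\vec s}}$ is a tuple, not an atom; you mean ${\tapply{\mu_i}{\vec s} = \vec t}$), and in the merge step you must treat the update of $\mu_i$ as a composition: terms $u$ with ${\mu_{i-1}(u) = t}$ must be redirected to $s$, and invariant (a) for those $u$ is licensed by transitivity, not just by the single derived equality ${s \equals t}$. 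Second, in the model construction the interpretation of $\equals$ must be exactly the identity on representatives for the biconditional in claim~(i) to fall out; it is worth saying so, since reflexivity, symmetry, transitivity, and the predicate-position congruence axioms are then satisfied trivially. Third, the ``main obstacle'' you identify is real but your proposed resolution is slightly off: it is not the case that function terms in $I$ keep their arguments in representative form until the end of the chase (an argument may be a representative at creation time and be merged away later). The actual reason nothing breaks is simpler: $\Cong{P}$ contains congruence axioms only for predicate argument positions, with no axiom of the form ${x \equals y \rightarrow f(x) \equals f(y)}$, so under $\models$ two terms differing only below a function symbol are never equated---which is precisely why both ${\tapply{\mu}{\cdot}}$ and the replacement in chase step~(i) act only on non-nested occurrences, and the two notions agree vacuously below the top level. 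With these repairs your proof goes through and supplies the argument the paper delegates to the literature.
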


Intuitively, $\mu(t)$ is a unique \emph{representative} of each ground term
$t$, and the chase maintains and propagates facts only among the representative
facts of $\fixpoint{P'}{B}$, instead of na{\"i}vely firing congruence rules.
The algorithm is used in systems such as Llunatic and RDFox \cite{chasebench}.

When reasoning with equality, an important question is whether programs are
allowed to equate constants. We say that $P$ and $B$ \emph{satisfy UNA} if ${P'
\cup B \models a \equals b}$ implies ${a = b}$. Our algorithms do not require
UNA to be satisfied, but certain steps can be optimized if we know that UNA is
satisfied.

\section{Motivation and Overview}\label{sec:motivation}

To understand the challenges of goal-driven query answering over existential
rules with $\equals$, let $\exSigma$ consist of
\eqref{runex:Q}--\eqref{runex:egd-T}.
\begin{align}
    A(x) \wedge R(x,y)                      & \rightarrow \predQ(x)                     \label{runex:Q} \\
    S(x,z)                                  & \rightarrow \exists y. R(x,y)             \label{runex:R} \\
    R(x,y) \wedge S(x,x') \wedge R(x',y')   & \rightarrow y \equals y'                  \label{runex:egd-RS} \\
    B(x)                                    & \rightarrow \exists y. T(x,y) \wedge A(y) \label{runex:TA} \\
    T(x,y)                                  & \rightarrow x \equals y                   \label{runex:egd-T}
\end{align}
Let ${\exB = \{ B(a_1) \} \cup \{ S(a_{i-1},a_i) \mid 1 < i \leq n \}}$. One
can check that ${\exSigma \cup \exB \models \predQ(a_i)}$ holds only for ${i =
1}$; however, all bottom-up techniques known to us will ``fire''
\eqref{runex:R} and \eqref{runex:egd-RS} for all $a_i$. In logic programming,
goal-driven or top-down approaches, such as SLD resolution, start from the
query and search for proofs backwards. The magic sets algorithm transforms a
program so that evaluating the result bottom-up mimics top-down evaluation.
These approaches are not directly applicable to existential rules, but we can
apply them to the program ${P' = P \cup \Ref{P} \cup \Cong{P} \cup \SymTrans}$,
obtained by Skolemizing TGDs as ${P = \sk{\exSigma}}$ and then axiomatizing
equality. This, however, is inefficient since the congruence axioms introduce
many redundant proofs. In particular, Skolemizing \eqref{runex:R} produces
${R(x,f(x)) \leftarrow S(x,z)}$. By rule \eqref{runex:egd-RS}, we have ${P'
\models f(a_{i-1}) \equals f(a_i)}$ for ${1 < i \leq n}$ so, by the
reflexivity, symmetry, and transitivity axioms for $\equals$, we have ${P'
\models f(a_i) \equals f(a_j)}$ for ${1 \leq i,j \leq n}$. Hence, by the
congruence axioms, we have ${P' \models R(a_i,f(a_j))}$. Thus, ${P' \models
\predQ(a_1)}$ has (at least) $n$ proofs, where the first step uses a ground
rule instance ${\predQ(a_1) \leftarrow A(a_1) \wedge R(a_1,f(a_i))}$ for each
${1 \leq i \leq n}$. The magic sets algorithm will explore all of these proofs,
which is very expensive. In contrast, our technique can answer the query by
considering this rule instantiated only for ${i = 1}$ (see
Example~\ref{ex:final}).

We present an approach that gives the benefits of top-down approaches, while
radically pruning the set of considered proofs. Our approach has additional
benefits. It does not require UNA, but certain steps can be optimized if
${\Sigma \cup B}$ satisfies UNA (e.g., if an earlier UNA check succeeded).
Moreover, it preserves chase termination, and it includes an optimized magic
set transformation using the symmetry of equality to greatly reducing the
number of output rules.

Our technique is presented in the pipeline shown in
Algorithm~\ref{alg:answer-query}. Instead of axiomatizing equality, we first
apply \emph{singularization} (line~\ref{alg:answer-query:sg}), a well-known
transformation that makes all relevant equalities explicit
\cite{marnette2009,tencate2009}, and then we convert the result to a logic
program using Skolemization (line~\ref{alg:answer-query:sk}). Next, we apply a
relevance analysis algorithm (line~\ref{alg:answer-query:relevance}) that
identifies the rules relevant to the query. We next apply the magic sets
transformation optimized for $\equals$ (line~\ref{alg:answer-query:magic}); the
removal of irrelevant equality atoms during relevance analysis ensures that
this step produces a smaller program. Finally, we remove the function symbols
(line~\ref{alg:answer-query:defun}) and equalities
(line~\ref{alg:answer-query:desg}) from rule bodies, obtaining a program that
can be safely evaluated using the chase for logic programs
(lines~\ref{alg:answer-query:chase}--\ref{alg:answer-query:predQ:end}). We
explain the components in detail in the following sections.

\begin{algorithm}[tp]
\caption{Compute the answers to query $\predQ$ over a finite set of existential rules $\Sigma$ and a base instance $B$}\label{alg:answer-query}
\begin{small}
\begin{algorithmic}[1]
    \State $\Sigma_1 \defeq \sg{\Sigma}$                                                                \label{alg:answer-query:sg}
    \State $P_2 \defeq \sk{\Sigma_1}$                                                                   \label{alg:answer-query:sk}
    \State $P_3 \defeq \relevance{P_2}{B}$                                                              \label{alg:answer-query:relevance}
    \State $P_4 \defeq \magic{P_3}$                                                                     \label{alg:answer-query:magic}
    \State $P_5 \defeq \defun{P_4}$                                                                     \label{alg:answer-query:defun}
    \State $P_6 \defeq \desg{P_5}$                                                                      \label{alg:answer-query:desg}
    \State $\langle I, \mu \rangle \defeq \chase{P_6}{B}$                                               \label{alg:answer-query:chase}
    \For{\textbf{each} $\predQ(\vec a) \in I$ where $\vec a$ are constants}                             \label{alg:answer-query:predQ:start}
        \State \textbf{output} each tuple of constants $\vec b$ with $\tapply{\mu}{\vec b} = \vec a$    \label{alg:answer-query:output}
    \EndFor                                                                                             \label{alg:answer-query:predQ:end}
\end{algorithmic}
\end{small}
\end{algorithm}

\section{Singularization}\label{sec:singularization}

Singularization is an alternative to congruence axioms.

\begin{definition}\label{def:sg}
    A \emph{singularization} of an existential rule $\tau$ is obtained from
    $\tau$ by exhaustively (i)~replacing each occurrence of a constant $c$ in a
    relational body atom with a fresh variable $x$ and adding atom ${x \equals
    c}$ to the body, and (ii)~for each variable $x$ occurring at least twice in
    (not necessarily distinct) relational body atoms, replacing one such
    occurrence with a fresh variable $x'$ and adding atom ${x' \equals x}$ to
    the body.

    A \emph{singularization} of a set of existential rules $\Sigma$ defining
    the query predicate $\predQ$ is obtained by replacing each TGD of the form
    ${\varphi \rightarrow \predQ(x_1, \dots, x_n)}$ with \eqref{eq:sing:predQ}
    for ${x_1', \dots, x_n'}$ fresh variables, and then singularizing all
    existential rules.
    \begin{align}
        \varphi \wedge \textstyle\bigwedge_{i=1}^n x_i \equals x_i' \rightarrow \predQ(x_1', \dots, x_n')   \label{eq:sing:predQ}
    \end{align}
\end{definition}

\begin{example}
On $\exSigma$, singularization leaves \eqref{runex:R}, \eqref{runex:TA}, and
\eqref{runex:egd-T} intact since their bodies do not contain repeated
variables. Rules \eqref{runex:Q} and \eqref{runex:egd-RS} are singularized as
\eqref{runex:Q-sg} and \eqref{runex:egd-RS-sg}.
\begin{align}
    A(x'') \wedge x \equals x'' \wedge R(x,y) \wedge x \equals x'   & \rightarrow \predQ(x') \label{runex:Q-sg} \\
    \begin{array}{@{}l@{}}
        R(x,y) \wedge x \equals x'' \; \wedge \\
        \qquad S(x'',x') \wedge x' \equals x''' \wedge R(x''',y') \\
        \end{array}                                                 &
        \begin{array}{@{}l@{}}
            \\
            \; \rightarrow y \equals y'
    \end{array}                                                                              \label{runex:egd-RS-sg}
\end{align}
\end{example}
The result of singularization is not unique: \eqref{runex:Q} could also produce
${A(x) \wedge x \equals x'' \wedge R(x'',y) \wedge x \equals x' \rightarrow
\predQ(x')}$. In our approach, we let $\sg{\Sigma}$ be any singularization of
$\Sigma$.

Singularization highlights the relevant equalities originating from joins,
which in turn preserves all query answers without relying on congruence axioms:
for each base instance $B$ and tuple ${\vec a}$ of constants, we have ${\Sigma
\cup B \modelsEq \predQ(\vec a)}$ if and only if ${\sg{\Sigma} \cup
\Ref{\Sigma} \cup \SymTrans \cup B \models \predQ(\vec a)}$. Singularization
still relies on reflexivity axioms. As an important optimization, we prove that
these do not need to be analyzed in the remaining steps of our pipeline, which
ensures that our pipeline produces smaller, more efficient programs. To achieve
this, we show that our transformations produce rules satisfying the following
condition.

\begin{definition}\label{def:eq-safe}
    A rule $r$ is \emph{$\equals$-safe} if, for each equality atom ${A \in
    \body{r}}$, the atom is of the form ${x \equals y}$ or ${x \equals s}$ for
    $s$ a ground term, and ${\vars{A} \cap \vec t_i \neq \emptyset}$ for some
    relational atom ${R_i(\vec t_i) \in \body{r}}$. A program is
    \emph{$\equals$-safe} all its rules are $\equals$-safe.
\end{definition}

Intuitively, $\equals$-safety ensures that each fact ${t \equals t}$ matching a
body atom of a rule $r$ can be derived from another relational body atom of
$r$. Thus, we do not need to pass the reflexivity axiom as input to the steps
of our pipeline in order to determine which of these are pertinent to $\predQ$.
Instead, the pertinent reflexivity axioms are determined directly by the
predicates occurring in the result of each pipeline step.

We apply Skolemization after singularization to eliminate existential
quantifiers.

\begin{example}
In our running example, only \eqref{runex:R} and \eqref{runex:TA} contain
existential quantifiers, so they are replaced by \eqref{runex:R-sk}, and
\eqref{runex:T-sk} and \eqref{runex:A-sk}; all other rules are reinterpreted as
logic programming rules. Program $P_2$ contains rules
\eqref{runex:Q-sk}--\eqref{runex:egd-T-sk}.
\begin{align}
    \predQ(x')      & \leftarrow A(x'') \wedge x \equals x'' \wedge R(x,y) \wedge x \equals x'                          \label{runex:Q-sk} \\
    R(x,f(x))       & \leftarrow S(x,z)                                                                                 \label{runex:R-sk} \\
    \begin{array}{@{}l@{}}
        y \equals y' \\
        \\
    \end{array}     & \begin{array}{@{\;}l@{}}
                        \leftarrow R(x,y) \wedge x \equals x'' \wedge S(x'',x') \; \wedge \\
                        \hspace{2.5cm} x' \equals x''' \wedge R(x''',y') \\
                       \end{array} \label{runex:egd-RS-sk} \\
    T(x,g(x))       & \leftarrow B(x)                                                                                   \label{runex:T-sk} \\
    A(g(x))         & \leftarrow B(x)                                                                                   \label{runex:A-sk} \\
    x \equals y     & \leftarrow T(x,y)                                                                                 \label{runex:egd-T-sk}
\end{align}

The answers to $\predQ$ on ${\Sigma \cup B}$ and ${P_2 \cup \Ref{P_2} \cup
\SymTrans \cup B}$ coincide on each base instance $B$, but the absence of
congruence axioms considerably reduces the number of proofs: EGD
\eqref{runex:egd-RS} still ensures ${P_2 \models f(a_i) \equals f(a_j)}$ for
${1 \leq i,j \leq n}$, but ${P_2 \models R(a_i,f(a_j))}$ holds only for ${i =
j}$. Thus, the only remaining proof of ${P_2 \models \predQ(a_1)}$ is via the
ground instance ${\predQ(a_1) \leftarrow C(a_1) \wedge R(a_1,f(a_1))}$, which
benefits all goal-driven techniques, including magic sets. Also, none of the
facts derived by EGD \eqref{runex:egd-RS} contribute to a proof of
$\predQ(a_1)$, so singularization makes EGD redundant; in
Section~\ref{sec:relevance} we present way to detect and eliminate such rules.
\end{example}

\section{Relevance Analysis}\label{sec:relevance}

The next step of the pipeline eliminates rules all of whose consequences are
irrelevant to $\predQ$. The idea is to homomorphically embed $B$ into a much
smaller instance $B'$ called an \emph{abstraction} of $B$. If $B'$ is
sufficiently small, we can analyze ways to derive answers to $\predQ$ on $B'$;
since homomorphism composition is a homomorphism, this will uncover all ways to
derive an answer to $\predQ$ on the original base instance $B$.

\begin{definition}\label{def:abstraction}
    A base instance $B'$ is an \emph{abstraction} of a base instance $B$
    w.r.t.\ a program $P$ if there exists a homomorphism $\eta$ from $B$ to
    $B'$ preserving the constants in $P$---that is, $\eta$ maps constants to
    constants such that ${\eta(B) \subseteq B'}$ and ${\eta(c) = c}$ for each
    constant $c$ occurring in $P$.
\end{definition}

To abstract $B$ into $B'$, we can use the \emph{critical instance} for $B$: for
$C$ the set of constants of $P$ and $\ast$ a fresh constant, we let $B'$
contain ${R(\vec a)}$ for each $n$-ary predicate $R$ occurring in $B$ and each
${\vec a \in (C \cup \{ \ast \})^n}$. We can further refine the abstraction if
predicates are \emph{sorted} (i.e., each predicate position is associated with
a sort such as strings or integers): we introduce a distinct fresh constant
$\ast_i$ per sort, and we form $B'$ as above while honoring the sorting
requirements.

Algorithm~\ref{alg:relevance} takes an $\equals$-safe program $P$ and a base
instance $B$, and it returns the rules relevant to answering $\predQ$ on $B$.
It selects an abstraction $B'$ of $B$ w.r.t.\ $P$
(line~\ref{alg:relevance:abstraction}), computes the consequences $I$ of $P$ on
$B'$ (line~\ref{alg:relevance:abstraction-fixpoint}), and identifies the rules
of $P$ contributing to the answers of $\predQ$ on $B'$ by a form of backward
chaining. It initializes the ``ToDo'' set $\mathcal{T}$ to all homomorphic
images of the answers to $\predQ$ on $B'$ (line~\ref{alg:relevance:init}) and
then iteratively explores $\mathcal{T}$
(lines~\ref{alg:relevance:while:start}--\ref{alg:relevance:while:end}). In each
iteration, it extracts a fact $F$ from $\mathcal{T}$
(line~\ref{alg:relevance:choose-F}) and then identifies each rule $r$ and
substitution $\nu$ matching the head of $r$ to $F$ and the body of $r$ in $I$
(line~\ref{alg:relevance:rule:start}). Such $\nu$ captures ways of deriving a
fact represented by $F$ from $B$ via $r$, so $r$ is added to the set
$\mathcal{R}$ of relevant rules if such $\mu$ exists
(line~\ref{alg:relevance:add-r}). Finally, the matched body atoms must be
derivable as well, so they are all added to $\mathcal{T}$
(line~\ref{alg:relevance:add-T-D}). The ``done'' set $\mathcal{D}$ ensures that
each fact is added to $\mathcal{T}$ just once, which ensures termination.

We can optimize the algorithm if ${P \cup B}$ is known to satisfy UNA (e.g., if
an earlier UNA check was conducted). If $\nu$ matches an atom ${x \equals t \in
\body{r}}$ as ${c \equals c}$, the corresponding derivation from $P$ and $B$
necessarily matches ${x \equals t}$ to ${d \equals d}$ for some constant $d$;
due to $\equals$-safety, we can derive ${d \equals d}$ using reflexivity
axioms, so we do not need to examine other proofs for ${d \equals d}$
(line~\ref{alg:relevance:UNA-opt}). Moreover, if all matches of ${x \equals t}$
are of such a form, then we can replace $x$ with $t$ in $r$ and inform the
subsequent magic sets transformation step that no equalities are relevant to
this atom. To this end, Algorithm~\ref{alg:relevance} maintains a set
$\mathcal{B}$ of ``blocked'' body equality atoms that records all body equality
atoms that can be matched to an equality not of the form ${d \equals d}$
(line~\ref{alg:relevance:add-B}). After considering all possibilities for
deriving certain answers, body equality atoms that have not been blocked are
removed (lines~\ref{alg:relevance:UNA:start}--\ref{alg:relevance:UNA:end}).

\begin{algorithm}[tb]
\caption{$\relevance{P}{B}$}\label{alg:relevance}
\begin{small}
\begin{algorithmic}[1]
    \State \textbf{choose} an abstraction $B'$ of $B$ w.r.t.\ $P$                                                           \label{alg:relevance:abstraction}
    \State $I \defeq \fixpoint{P'}{B'}$ for $P' = P \cup \Ref{P} \cup \SymTrans$                                            \label{alg:relevance:abstraction-fixpoint}
    \State $\mathcal{D} \defeq \mathcal{T} \defeq \{ \predQ(\vec a) \in I \mid \vec a \text{ is a tuple of constants} \}$   \label{alg:relevance:init}
    \State $\mathcal{R} \defeq \emptyset$ \;\; and \;\; $\mathcal{B} \defeq \emptyset$
    \While{$\mathcal{T} \neq \emptyset$}                                                                                    \label{alg:relevance:while:start}
        \State \textbf{choose and remove} some fact $F$ from $\mathcal{T}$                                                  \label{alg:relevance:choose-F}
        \For{\textbf{each} $r \in P \cup \SymTrans$ and substitution $\nu$ such that
            \Statex \hspace{1.55cm} $\nu(\head{r}) = F$ and $\nu(\body{r}) \subseteq I$}                                    \label{alg:relevance:rule:start}
            \If{$r \not\in \mathcal{R} \cup \SymTrans$}
                \textbf{add} $r$ to $\mathcal{R}$                                                                           \label{alg:relevance:add-r}
            \EndIf
            \For{\textbf{each} $G_i \in \nu(\body{r})$}
                \If{$G_i$ is not of the form $c \equals c$ with $c$ a constant, or
                    \Statex \hspace{1.6cm} $P \cup B$ is not known to satisfy UNA}                                          \label{alg:relevance:UNA-opt}
                    \If{$G_i \not\in \mathcal{D}$}
                        \textbf{add} $G_i$ to $\mathcal{T}$ and $\mathcal{D}$                                               \label{alg:relevance:add-T-D}
                    \EndIf
                    \If{$G_i$ is an equality}
                        \textbf{add} $\langle r,i \rangle$ to $\mathcal{B}$                                                 \label{alg:relevance:add-B}
                    \EndIf
                \EndIf
            \EndFor
        \EndFor                                                                                                             \label{alg:relevance:rule:end}
    \EndWhile                                                                                                               \label{alg:relevance:while:end}
    \If{$P$ and $B$ and known to satisfy UNA}                                                                               \label{alg:relevance:UNA:start}
        \For{\textbf{each} $r \in \mathcal{R}$ and each $i$-th atom of $\body{r}$ of the form
            \Statex \hspace{1.55cm} $x \equals t$ with $t$ a term and $\langle r,i \rangle \not\in \mathcal{B}$}
            \State \textbf{remove} $x \equals t$ from $r$ and replace $x$ with $t$ in $r$                                   \label{alg:relevance:UNA:desg}
        \EndFor
    \EndIf                                                                                                                  \label{alg:relevance:UNA:end}
    \State \Return $\mathcal{R}$
\end{algorithmic}
\end{small}
\end{algorithm}

The computation of $I$ in line~\ref{alg:relevance:abstraction-fixpoint} may not
terminate in general, but we shall apply Algorithm~\ref{alg:relevance} in our
pipeline only in cases where termination is guaranteed.
Theorem~\ref{theorem:relevance} shows that, in that case, the query answers
remain preserved.

\begin{restatable}{theorem}{thmrelevance}\label{theorem:relevance}
    For each $\equals$-safe program $P$ defining the query predicate $\predQ$
    and base instance $B$ where line~\ref{alg:relevance:abstraction-fixpoint}
    of Algorithm~\ref{alg:relevance} terminates, program ${\mathcal{R} =
    \relevance{P}{B}}$ is $\equals$-safe, and, for each tuple ${\vec a}$ of
    constants, ${P \cup \Ref{P} \cup \SymTrans \cup B \models \predQ(\vec a)}$
    if and only if ${\mathcal{R} \cup \Ref{\mathcal{R}} \cup \SymTrans \cup B
    \models \predQ(\vec a)}$.
\end{restatable}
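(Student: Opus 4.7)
The plan is to decompose the theorem into three parts: (a) $\equals$-safety of $\mathcal{R}$, (b) the soundness (``if'') direction, and (c) the completeness (``only if'') direction. Part (a) follows by inspection of Algorithm~\ref{alg:relevance}: rules not touched by the UNA branch are inherited from the $\equals$-safe program $P$, and for a rule modified at line~\ref{alg:relevance:UNA:desg}, deleting $x \equals t$ and substituting $x$ by $t$ keeps each surviving body equality in the permitted form $x' \equals y'$ or $x' \equals s$ and preserves its variable-sharing link with a relational body atom (because $x \equals t$ itself enjoyed that link). Part (b) is easy because when no UNA rewriting is applied, $\mathcal{R} \subseteq P$; if $r' \in \mathcal{R}$ was obtained from $r \in P$ by deleting a body atom $x \equals t$, then any derivation firing $r'$ with a substitution $\sigma$ can be simulated by $r$ with the extension that maps $x$ to $\sigma(t)$, since the reinstated body atom reduces to $\sigma(t) \equals \sigma(t)$, which is supplied by $\Ref{P}$.

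Part (c) is the heart of the proof. The plan is to fix a proof tree $\pi$ of $\predQ(\vec a)$ from $P \cup \Ref{P} \cup \SymTrans \cup B$ and show that each rule of $P$ fired in $\pi$ is represented in $\mathcal{R}$, so that $\pi$ itself (after a cosmetic adjustment for UNA-rewritten rules) witnesses $\mathcal{R} \cup \Ref{\mathcal{R}} \cup \SymTrans \cup B \models \predQ(\vec a)$. The key device is the homomorphism $\eta$ of Definition~\ref{def:abstraction}: extending it to ground terms by $\eta(f(\vec t)) = f(\eta(\vec t))$ and applying it pointwise throughout $\pi$ produces a shadow proof $\eta(\pi)$ of $\predQ(\eta(\vec a))$ from $P \cup \Ref{P} \cup \SymTrans \cup B'$, because $\eta$ fixes the constants of $P$ and therefore leaves every rule application well-formed. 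All facts of $\eta(\pi)$ thus lie in $I = \fixpoint{P'}{B'}$ computed in line~\ref{alg:relevance:abstraction-fixpoint}, and $\predQ(\eta(\vec a))$ enters $\mathcal{T}$ at line~\ref{alg:relevance:init}. A structural induction on $\pi$ (root first) now shows that for every fact $F$ derived in $\pi$ via rule $r \in P \cup \SymTrans$ with substitution $\sigma$, the algorithm eventually processes $\eta(F)$, enumerates the pair $\langle r, \eta \circ \sigma \rangle$ at line~\ref{alg:relevance:rule:start}, adds $r$ to $\mathcal{R}$, and propagates the images of $\body{r}$ into $\mathcal{T}$.

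The main obstacle is the interaction between this argument and the UNA optimisation of lines~\ref{alg:relevance:UNA:start}--\ref{alg:relevance:UNA:end}, because a rule $r \in P$ used in $\pi$ may be replaced in $\mathcal{R}$ by a rule $r'$ obtained by deleting some body equality $x \equals t$ and substituting $x$ by $t$. I must verify that $r'$ still suffices to reproduce every instance of $r$ used in $\pi$. By the definition of $\mathcal{B}$, every match $\eta \circ \sigma$ of $\body{r}$ against $I$ satisfies $\eta(\sigma(x)) = \eta(\sigma(t)) = c$ for some constant $c$; since $\eta$ is defined only on constants, both $\sigma(x)$ and $\sigma(t)$ must themselves be constants. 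The UNA assumption on $P \cup B$, together with the fact that $\sigma(x) \equals \sigma(t)$ is derivable in $\pi$, then forces $\sigma(x) = \sigma(t)$. Consequently the restriction of $\sigma$ to $\vars{r'}$ is a valid firing of $r'$ with the same head instance, and the excised atom reduces to $\sigma(t) \equals \sigma(t)$, which $\Ref{\mathcal{R}}$ supplies thanks to $\equals$-safety of $\mathcal{R}$ already established in part (a). Once this subtlety is discharged, the remainder of part (c) is routine bookkeeping.
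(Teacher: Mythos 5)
Your overall strategy coincides with the paper's: parts (a) and (b) match the paper's safety and ($\Leftarrow$) arguments, and part (c) is the paper's ($\Rightarrow$) direction recast as a root-first induction on a proof tree rather than a forward induction on the fixpoint sequence (the paper proves $\eta(I_i)\subseteq I$ and then shows that every fact whose $\eta$-image lands in $\mathcal{D}$ is derivable from $\mathcal{R}\cup\Ref{\mathcal{R}}\cup\SymTrans\cup B$). These formulations are interchangeable, so I do not count the packaging as a different route.

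There is, however, one step in part (c) that fails as stated. Your induction invariant claims that for \emph{every} fact $F$ derived in $\pi$ by a rule $r\in P\cup\SymTrans$, the algorithm processes $\eta(F)$, adds $r$ to $\mathcal{R}$, and ``propagates the images of $\body{r}$ into $\mathcal{T}$.'' This is not what Algorithm~\ref{alg:relevance} does: when UNA is known, line~\ref{alg:relevance:UNA-opt} deliberately refuses to put any image of the form $c\equals c$ into $\mathcal{T}$ or $\mathcal{D}$. Consequently, if a body atom of some rule instance in $\pi$ is matched to a fact $d\equals d$ whose $\eta$-image is $c\equals c$, the subtree of $\pi$ deriving $d\equals d$ (say via an EGD) is never traversed, and the rules fired in that subtree need not end up in $\mathcal{R}$ at all. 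So the conclusion ``each rule of $P$ fired in $\pi$ is represented in $\mathcal{R}$'' is false, and the fix is more than a cosmetic adjustment: you must \emph{replace} those pruned subtrees by fresh derivations of $d\equals d$ from $\Ref{\mathcal{R}}$, using $\equals$-safety to find a relational body atom of the \emph{parent} rule that contains $d$ and whose own derivation \emph{is} traced. You already deploy exactly this reflexivity-plus-safety device, but only for equality atoms that are excised at line~\ref{alg:relevance:UNA:desg} (where, incidentally, it is not needed, since an excised atom imposes no proof obligation); the place where it is genuinely required is for equality atoms that \emph{survive} in the rule but whose matches in the shadow proof are reflexive. The clean repair is the paper's weaker invariant: restrict the induction to facts $F$ not of the form $t\equals t$ with $\eta(F)\in\mathcal{D}$, and discharge the reflexive body facts separately via $\equals$-safety and $\Ref{\mathcal{R}}$. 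Your treatment of the blocked-set bookkeeping for line~\ref{alg:relevance:UNA:desg} (deducing $\sigma(x)=\sigma(t)$ from UNA and the fact that $\eta$ sends constants to constants) is correct and matches the paper.
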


If computing $\fixpoint{P'}{B'}$ in line~\ref{alg:relevance:abstraction} is
difficult (as is the case in some of our experiments), we can replace each term
${f(\vec x)}$ in $P$ with a fresh constant $c_f$ unique for $f$. This does not
affect the algorithm's correctness, since $\fixpoint{P'}{B}$ can still be
homomorphically embedded into $\fixpoint{P'}{B'}$. Moreover, the resulting
program then does not contain function symbols, and so the computation in
line~\ref{alg:relevance:abstraction-fixpoint} necessarily terminates.
  
\begin{example}
In our running example, UNA holds on $\exSigma$ and $\exB$, and we shall assume
that this is known in advance. Moreover, we shall take $B'$ to be the critical
instance, containing facts $B(\ast)$ and $S(\ast,\ast)$. Computing the least
fixpoint in line~\ref{alg:relevance:abstraction-fixpoint} of
Algorithm~\ref{alg:relevance} produces the following instance $I$.
\begin{align}
    B(\ast)                && S(\ast,\ast)              && R(\ast,f(\ast))        && f(\ast) \equals f(\ast)  \nonumber \\
    T(\ast,g(\ast))        && A(g(\ast)) \nonumber      && \ast \equals \ast      && \ast \equals g(\ast)      \nonumber \\
    g(\ast) \equals \ast   && g(\ast) \equals g(\ast)   && \predQ(\ast)           && \predQ(g(\ast))           \nonumber
\end{align}
Algorithm~\ref{alg:relevance} starts by considering $\predQ(\ast)$. Matching
the fact to the head of \eqref{runex:Q-sk} and evaluating the body in $I$
produces the ground rule instance
\begin{displaymath}
    \predQ(\ast) \leftarrow A(g(\ast)) \wedge \ast \equals g(\ast) \wedge R(\ast,f(\ast)) \wedge \ast \equals \ast;
\end{displaymath}    
thus, rule \eqref{runex:Q-sk} is identified as relevant. UNA is known to hold,
so atom ${\ast \equals \ast}$ is not considered any further, but the algorithm
must consider the remaining body atoms. Matching ${g(\ast) \equals f(\ast)}$ to
the head of \eqref{runex:egd-T-sk} and evaluating the body in $I$ produces the
ground rule instance ${\ast \equals g(\ast) \leftarrow T(\ast,g(\ast))}$, so
rule \eqref{runex:egd-T-sk} is identified as relevant; moreover, matching
${T(\ast,g(\ast))}$ to the head of \eqref{runex:T-sk} produces the ground rule
instance ${T(\ast,g(\ast)) \leftarrow B(\ast)}$, so rule \eqref{runex:T-sk} is
identified as relevant as well. In contrast, matching ${\ast \equals g(\ast)}$
to the head of \eqref{runex:egd-RS-sk} produces query
\begin{displaymath}
    R(x,\ast) \wedge \equals x'' \wedge S(x'',x') \wedge x' \equals x''' \wedge R(x''',g(\ast)),
\end{displaymath}
which has no matches in $I$; thus, rule \eqref{runex:egd-RS-sk} is not added to
$\mathcal{R}$. Next, matching $R(\ast,f(\ast))$ to the head of
\eqref{runex:R-sk} and evaluating the body in $I$ produces the ground rule
instance ${R(\ast,f(\ast)) \leftarrow S(\ast,\ast)}$; thus, rule
\eqref{runex:R-sk} is identified as relevant. Finally, $B(\ast)$ and
$S(\ast,\ast)$ do not match any rule head, so the algorithm terminates. Thus,
the algorithm returns all rules apart from \eqref{runex:egd-RS-sk}. Moreover,
atom ${x \equals x''}$ in \eqref{runex:Q-sk} is matched to ${\ast \equals
f(\ast)}$ so it cannot be removed---that is, this equality is relevant. In
contrast, atom ${x \equals x'}$ in \eqref{runex:Q-sk} is matched \emph{only} to
${\ast \equals \ast}$; since we know that UNA holds, this equality is
irrelevant and it is removed in
lines~\ref{alg:relevance:UNA:start}--\ref{alg:relevance:UNA:end} of
Algorithm~\ref{alg:relevance}. Consequently, the algorithm returns program
$P_3$ consisting of rules \eqref{runex:Q-rel}--\eqref{runex:egd-T-rel}.
\begin{align}
    \predQ(x)   & \leftarrow A(x'') \wedge x \equals x'' \wedge R(x,y)  \label{runex:Q-rel} \\
    R(x,f(x))   & \leftarrow S(x,z)                                     \label{runex:R-rel} \\
    T(x,g(x))   & \leftarrow B(x)                                       \label{runex:T-rel} \\
    A(g(x))     & \leftarrow B(x)                                       \label{runex:A-rel} \\
    x \equals y & \leftarrow T(x,y)                                     \label{runex:egd-T-rel}
\end{align}
\end{example}    

\section{Magic Sets for Existential Rules with $\equals$} \label{sec:magic}

We now present our variant of the magic sets transformation. Our technique also
mimics top-down evaluation, but with a specialized treatment of equality that
takes the symmetry of $\equals$ into account and further prunes the set of
proofs for facts of the form ${t \equals t}$. In particular, singularization
critically relies on reflexivity axioms, and passing these to the magic sets
algorithm would significantly blow up the resulting rule set. The
$\equals$-safety of the rules implies that our magic transformation does not
need to be applied to the reflexivity rules, which results in a much more
efficient magic program.

We follow \citet{DBLP:journals/jlp/BeeriR91} in defining adornments and magic
predicates for predicates other than $\equals$, but, as we discuss shortly, we
optimize these notions for $\equals$. Intuitively, an adornment identifies
which arguments of an atom will be bound by sideways information passing, and a
magic predicate will ``collect'' the passed arguments.

\begin{definition}\label{def:adornment}
    An \emph{adornment} for an $n$-ary predicate $R$ other than $\equals$ is a
    string $\alpha$ of length $n$ over alphabet $\ad{b}$ (``bound'') and
    $\ad{f}$ (``free''), and $\mgc{R}{\alpha}$ is a fresh \emph{magic}
    predicate unique for $R$ and $\alpha$ with arity equal to the number of
    $\ad{b}$-symbols in $\alpha$. An \emph{adornment} for $\equals$ has the
    form $\ad{bb}$, $\ad{bf}$, or $\ad{fb}$, and $\mgc{\equals}{\ad{bb}}$ and
    $\mgc{\equals}{\adEqb}$ are fresh \emph{magic} predicates for $\equals$ of
    arity two and one, respectively. For $\alpha$ an adornment of length $n$
    and ${\vec t}$ an $n$-tuple of terms, ${\vec t^\alpha}$ contains in the
    same relative order each ${t_i \in \vec t}$ for which the $i$-th element of
    $\alpha$ is $\ad{b}$.
\end{definition}

Definition~\ref{def:adornment} takes into account that, if one argument of an
equality atom is bound, the other argument will also be bound due to the
symmetry of $\equals$. Thus, predicate $\mgc{\equals}{\adEqb}$ is used for both
$\ad{bf}$ and $\ad{fb}$, where notation $\adEqb$ stresses that the positions of
$\ad{b}$ and $\ad{f}$ are interchangeable. Moreover, at least one argument of
an equality atom must be bound, so $\equals$ cannot be adorned by $\ad{ff}$.
Definition~\ref{def:SIPS} introduces a sideways information passing strategy
(SIPS), which determines how information is propagated through the rule bodies.
Function $\mathsf{reorder}$ reorders the rule bodies to maximize information
passing, and function $\mathsf{adorn}$ decides which arguments of an atom
should be bound given a set of available bindings.

\begin{definition}\label{def:SIPS}
    A \emph{sideways information passing strategy} consists of the following
    two functions.
    
    For $\varphi$ a conjunction of atoms and $T$ a set of terms,
    $\mathsf{reorder}(\varphi,T)$ returns an ordering ${\langle R_1(\vec t_1),
    \dots, R_n(\vec t_n) \rangle}$ of the conjuncts of $\varphi$ such that, for
    each equality atom $R_i(\vec t_i)$ of the form ${x \equals y}$ or ${x
    \equals s}$ where $s$ is ground, ${z \in \vars{\vec t_i}}$ exists such that
    ${z \in T}$ or ${z \in \vec t_j}$ for some ${j < i}$ with ${R_j \neq
    {\equals}}$.
    
    For ${R(t_1, \dots, t_k)}$ an atom and $V$ a set of variables,
    ${\mathsf{adorn}(R(t_1, \dots, t_k),V)}$ returns an adornment $\alpha$ for
    $R$ such that ${\vars{t_j} \subseteq V}$ if the $j$-th element of $\alpha$
    is $\ad{b}$.
\end{definition}

Algorithm~\ref{alg:magic} implements the magic sets transformation optimized
for $\equals$. It initializes the ``ToDo'' set $\mathcal{T}$ with the magic
predicate for the query (line~\ref{alg:magic:init}) and processes $\mathcal{T}$
iteratively. For each magic predicate $\mgc{R}{\alpha}$ in $\mathcal{T}$
(line~\ref{alg:magic:R:start}), it identifies each rule $r$ that can derive $R$
(line~\ref{alg:magic:r:start}). Adornment ${\alpha = \adEqb}$ is processed as
both $\ad{bf}$ and $\ad{fb}$
(lines~\ref{alg:magic:process:bf}--\ref{alg:magic:process:fb}), and each other
$\alpha$ is processed as is (line~\ref{alg:magic:process}). In all cases, the
algorithm produces the \emph{modified rule} by restricting the body of $r$ by
the magic predicate corresponding to the head predicate
(line~\ref{alg:magic:mod-rule}), and then it reorders
(line~\ref{alg:magic:reorder}) and processes
(lines~\ref{alg:magic:body:start}--\ref{alg:magic:body:end}) the body of $r$.
For each body atom ${R_i(\vec t_i)}$ with $R_i$ occurring in the head of $P$,
the algorithm uses the SIPS to determine an adornment $\gamma$ identifying the
bound arguments (line~\ref{alg:magic:adorn}), and it generates the \emph{magic
rule} that populates $\mgc{\equals}{\adEqb}$ or $\mgc{R_i}{\gamma}$ with the
bindings for $R_i$ (line~\ref{alg:magic:magic-rule}). The algorithm takes into
account that $\mgc{\equals}{\adEqb}$ captures both $\ad{bf}$ and $\ad{fb}$
(line~\ref{alg:magic:S:start}--\ref{alg:magic:S:end}). The magic rule would not
be $\equals$-safe if ${R_i(\vec t_i)}$ were an equality atom with no arguments
bound, which, at the end of our pipeline, could produce a rule with variables
occurring the head but not the body. Thus, Definition~\ref{def:adornment} does
not introduce the $\ad{ff}$ adornment for $\equals$, and
Definition~\ref{def:SIPS} requires at least one argument of each equality atom
to be bound. Finally, the magic predicate must also be processed
(line~\ref{alg:magic:add-S}), and the ``done'' set $\mathcal{D}$ ensures that
this happens just once.

\begin{algorithm}[tb]
\caption{$\magic{P}$}\label{alg:magic}
\begin{small}
\begin{algorithmic}[1]
    \State $\mathcal{D} \defeq \mathcal{T} \defeq \{ \mgc{\predQ}{\alpha} \}$ and $\mathcal{R} \defeq \{ \mgc{\predQ}{\alpha} \leftarrow \}$, \;\; $\alpha = \ad{f} \cdots \ad{f}$  \label{alg:magic:init}
    \While{$\mathcal{T} \neq \emptyset$}
        \State \textbf{choose and remove} some $\mgc{R}{\alpha}$ from $\mathcal{T}$                                                                                                 \label{alg:magic:R:start}
        \For{\textbf{each} ${r \in P \cup \SymTrans}$ such that $\head{r} = R(\vec t)$}                                                                                             \label{alg:magic:r:start}
            \If{$R = {\equals}$ and $\alpha = \adEqb$}
                \State $\mathsf{process}(r,\alpha,\ad{bf})$                                                                                                                         \label{alg:magic:process:bf}
                \State $\mathsf{process}(r,\alpha,\ad{fb})$                                                                                                                         \label{alg:magic:process:fb}
            \Else
                \State $\mathsf{process}(r,\alpha,\alpha)$                                                                                                                          \label{alg:magic:process}
            \EndIf
        \EndFor                                                                                                                                                                     \label{alg:magic:r:end}
    \EndWhile                                                                                                                                                                       \label{alg:magic:R:end}
    \State \Return $\mathcal{R}$                                                                                                                                                    \label{alg:magic:return}
    \Statex
    \Procedure{$\mathsf{process}$}{$r, \alpha, \beta$} where $\head{r} = R(\vec t)$
        \State \textbf{add} $\head{r} \leftarrow \mgc{R}{\alpha}(\vec t^\beta) \wedge \body{r}$ to $\mathcal{R}$                                                                    \label{alg:magic:mod-rule}
        \State $\langle R_1(\vec t_1), \dots, R_n(\vec t_n) \rangle \defeq \mathsf{reorder}(\body{r},\vec t^\beta)$                                                                 \label{alg:magic:reorder}
        \For{$1 \leq i \leq n$ s.t.\ $R_i$ is $\equals$ or it occurs in $P$ in a head}                                                                                              \label{alg:magic:body:start}
            \State $\gamma \defeq \mathsf{adorn}(R_i(\vec t_i),\vars{\vec t^\beta} \cup \bigcup_{j=1}^{i-1} \vars{\vec t_j})$                                                       \label{alg:magic:adorn}
            \If{$R_i = {\equals}$ and $\gamma \in \{ \ad{bf}, \ad{fb} \}$}                                                                                                          \label{alg:magic:S:start}
                $S \defeq \mgc{\equals}{\adEqb}$
            \Else\,
                $S \defeq \mgc{R_i}{\gamma}$
            \EndIf                                                                                                                                                                  \label{alg:magic:S:end}
            \State \textbf{add} $S(\vec t_i^\gamma) \leftarrow \mgc{R}{\alpha}(\vec t^\beta) \wedge \bigwedge_{j=1}^{i-1} R_j(\vec t_j)$ to $\mathcal{R}$                           \label{alg:magic:magic-rule}
            \If{$S \not \in \mathcal{D}$}
                \textbf{add} $S$ to $\mathcal{T}$ and $\mathcal{D}$                                                                                                                 \label{alg:magic:add-S}
            \EndIf
        \EndFor                                                                                                                                                                     \label{alg:magic:body:end}
    \EndProcedure
\end{algorithmic}
\end{small}
\hrule
\textbf{Note:} please remember that $t_1 \equals t_2$ can also be written as
${\equals}(t_1,t_2)$, so $R(\vec t)$ and $R_i(\vec t_i)$ can be equality atoms.
\end{algorithm}

Theorem~\ref{theorem:magic} shows that Algorithm~\ref{alg:magic} preserves
$\equals$-safety and query answers, and Theorem~\ref{theorem:termination} shows
that it also preserves chase termination. The latter does not hold for all
programs with function symbols; for example, transforming ${A(x) \leftarrow
A(f(x))}$ can produce the nonterminating rule ${\mgc{A}{f}(f(x)) \leftarrow
\mgc{A}{f}(x)}$. However, in Algorithm~\ref{alg:answer-query}, the magic sets
are applied in line~\ref{alg:answer-query:magic} only to programs $P_3$ that do
not contain function symbols in the body, which suffices to show that the heads
of the magic rules are function-free and cannot derive terms of unbounded
depth, and therefore the transformation does not affect termination.

\begin{restatable}{theorem}{thmmagic}\label{theorem:magic}
    For each $\equals$-safe program $P$ defining the query predicate $\predQ$,
    program ${\mathcal{R} = \magic{P}}$ is $\equals$-safe; moreover, for each
    base instance $B$ and each tuple ${\vec t}$ of ground terms,
    \begin{displaymath}
    \begin{array}{@{}r@{\;}l@{\;\;}l@{}}
        P \cup \Ref{P} \cup \SymTrans \cup B                        & \models \predQ(\vec t)    & \text{iff} \\
        \mathcal{R} \cup \Ref{\mathcal{R}} \cup \SymTrans \cup B    & \models \predQ(\vec t).
    \end{array}
    \end{displaymath}
\end{restatable}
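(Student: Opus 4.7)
The plan is to establish the theorem in three parts: $\equals$-safety of $\mathcal{R}$, soundness (from $\mathcal{R}$ to $P$), and completeness (from $P$ to $\mathcal{R}$). For $\equals$-safety, I would inspect each kind of rule added to $\mathcal{R}$. A modified rule $\head{r} \leftarrow \mgc{R}{\alpha}(\vec{t}^\beta) \wedge \body{r}$ inherits safety directly, since $r$ is $\equals$-safe by hypothesis and the added magic atom is relational. For a magic rule $S(\vec{t}_i^\gamma) \leftarrow \mgc{R}{\alpha}(\vec{t}^\beta) \wedge \bigwedge_{j<i} R_j(\vec{t}_j)$, I would invoke the defining property of $\mathsf{reorder}$ in Definition~\ref{def:SIPS}: any equality atom $R_j(\vec{t}_j)$ in the reordered prefix has some variable bound either by $\vec{t}^\beta$, which already appears inside the magic atom, or by an earlier non-equality body atom, so the required variable-sharing condition holds.

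Soundness reduces to a ``magic-forgetting'' map that erases every magic atom from rule bodies: each modified rule then collapses to the original rule of $P$, while each magic rule becomes a rule whose head uses a magic predicate that does not occur in $P$ or in any query. A routine induction on the fixpoint computation of $\mathcal{R} \cup \Ref{\mathcal{R}} \cup \SymTrans \cup B$ then shows that every non-magic fact derivable there is derivable in $P \cup \Ref{P} \cup \SymTrans \cup B$. For completeness, I would proceed by induction on a proof tree of $\predQ(\vec{t})$ in $P \cup \Ref{P} \cup \SymTrans \cup B$, establishing a ``magic atom coverage'' invariant: whenever the top-down simulation introduces a subgoal $R(\vec{u})$ with an associated adornment $\alpha$, the fact $\mgc{R}{\alpha}(\vec{u}^\alpha)$ is already derivable in $\mathcal{R} \cup \Ref{\mathcal{R}} \cup \SymTrans \cup B$. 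The base case is the seed $\mgc{\predQ}{\ad{f}\cdots\ad{f}}$ added at line~\ref{alg:magic:init}; the inductive step uses the magic rule generated at line~\ref{alg:magic:magic-rule} to propagate bindings through the body in SIPS order, and then concludes via the modified rule from line~\ref{alg:magic:mod-rule}.

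The main obstacle will be handling equality faithfully, in particular reconciling the optimization that $\mgc{\equals}{\adEqb}$ represents both the $\ad{bf}$ and $\ad{fb}$ adornments with the symmetry axiom in $\SymTrans$. I expect to need a dedicated case analysis for body atoms of the form $x \equals t$: when a derivation in $P$ applies symmetry to swap the arguments of an equality, the single magic predicate must still certify the appropriate equality in $\mathcal{R}$, which is enabled by the two $\mathsf{process}$ calls at lines~\ref{alg:magic:process:bf} and~\ref{alg:magic:process:fb} combined with $\SymTrans$. Because $\Ref{P}$ and $\SymTrans$ are not themselves transformed by $\magic{}$, I would also verify that $\Ref{\mathcal{R}}$ supplies every reflexivity fact $t \equals t$ needed by the $\equals$-safe bodies of $\mathcal{R}$; this holds because $\equals$-safety forces $t$ to appear inside a relational atom whose predicate is present in $\mathcal{R}$, so $\Ref{\mathcal{R}}$ contains a matching reflexivity rule.
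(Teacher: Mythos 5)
Your proposal is correct and follows essentially the same route as the paper's proof: a direct syntactic check of the two rule shapes for $\equals$-safety (using the $\mathsf{reorder}$ condition of Definition~\ref{def:SIPS} for the magic rules), a trivial soundness direction obtained by erasing magic atoms, and the standard magic-sets completeness induction that propagates magic facts through bodies in SIPS order, with the same two delicate points — the shared $\mgc{\equals}{\adEqb}$ predicate handled via the two $\mathsf{process}$ calls together with $\SymTrans$, and reflexivity facts $t \equals t$ recovered from $\Ref{\mathcal{R}}$ via $\equals$-safety of the transformed rules. The only cosmetic difference is that you run the completeness induction top-down over a proof tree with an unconditional ``magic atom coverage'' invariant, whereas the paper runs it bottom-up over the fixpoint stages with the conditional invariant ${\mathcal{R} \cup \Ref{\mathcal{R}} \cup \SymTrans \cup B \cup \{ \mgc{R}{\alpha}(\vec s^\alpha) \} \models R(\vec s)}$; these are interchangeable formulations of the same argument.
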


\begin{restatable}{theorem}{thmtermination}\label{theorem:termination}
    Let $P$ be a program where the body of each rule is function-free, and let
    ${P_1 = P \cup \Ref{P} \cup \SymTrans}$ and ${P_2 = \mathcal{R} \cup
    \Ref{\mathcal{R}} \cup \SymTrans}$ for ${\mathcal{R} = \magic{P}}$. For
    each base instance $B$, if ${\fixpoint{P_1}{B}}$ is finite, then
    ${\fixpoint{P_2}{B}}$ is finite as well.
\end{restatable}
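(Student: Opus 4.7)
The plan is to split $\fixpoint{P_2}{B}$ into its \emph{non-magic part} (facts whose predicate is from $P$ or is $\equals$) and its \emph{magic part} (facts of the form $\mgc{R}{\alpha}(\vec a)$) and bound each using the finiteness of $\fixpoint{P_1}{B}$. I introduce $T$, the subterm-closure of the (finite) set of ground terms appearing in $\fixpoint{P_1}{B}$; $T$ is itself finite and closed under subterms, and the whole proof amounts to showing that every term anywhere in $\fixpoint{P_2}{B}$ lies in $T$.

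For the non-magic part, every rule of $\mathcal{R}$ with a non-magic head is a modified rule $\head{r} \leftarrow \mgc{R}{\alpha}(\vec t^\beta) \wedge \body{r}$ produced by the $\mathsf{process}$ procedure for some $r \in P \cup \SymTrans$; dropping the magic atom recovers $r$, which lies in $P_1$. By induction on derivation depth, every non-magic fact of $\fixpoint{P_2}{B}$ then also lies in $\fixpoint{P_1}{B}$, modulo at most finitely many reflexive equalities $t \equals t$ that $\Ref{\mathcal{R}}$ derives from arguments $t$ of magic facts. The mild subtlety is to argue these extras cannot drive any subsequent modified-rule firing outside $\fixpoint{P_1}{B}$: the $\equals$-safety of $\mathcal{R}$ (from Theorem~\ref{theorem:magic}) forces every equality body atom to share a variable with a relational body atom, so the matched term is pinned as an argument of a non-magic fact already in $\fixpoint{P_1}{B}$, whose reflexivity axiom is in $P_1$ anyway. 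Hence the non-magic part of $\fixpoint{P_2}{B}$ is finite.

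For the magic part, I prove by induction on derivation depth that every argument of every magic fact in $\fixpoint{P_2}{B}$ lies in $T$. The function-free-body hypothesis enters exactly at the structural observation that every magic rule produced by Algorithm~\ref{alg:magic} has a function-free head pattern $S(\vec t_i^\gamma)$, because $\vec t_i^\gamma$ is a projection of a body atom $R_i(\vec t_i)$ of some rule of $P \cup \SymTrans$, all of whose bodies are function-free. For a firing $S(\vec t_i^\gamma) \leftarrow \mgc{R}{\alpha}(\vec t^\beta) \wedge \bigwedge_{j<i} R_j(\vec t_j)$ under a substitution $\sigma$, the SIPS requirement places each variable of $\vec t_i^\gamma$ either in some preceding $\vec t_j$ (in which case $\sigma$ maps it to an argument of a non-magic fact, which by the previous paragraph lies in $\fixpoint{P_1}{B}$ and hence in $T$) or in $\vec t^\beta$. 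The second case is the main obstacle: $\vec t^\beta$ may carry function symbols inherited from the Skolemized $\head{r}$, so matching it against the magic fact $\mgc{R}{\alpha}(\vec s)$ does not give $\sigma(x)$ as a whole argument of that fact but only as a possibly proper subterm of some $s_k$. The inductive hypothesis gives $\vec s \subseteq T$, and closure of $T$ under subterms delivers $\sigma(x) \in T$; together with function-freeness of $\vec t_i^\gamma$, this yields $\sigma(\vec t_i^\gamma) \subseteq T$. Since there are only finitely many magic predicates, each of bounded arity with arguments in the finite set $T$, the magic part is finite, and combined with the non-magic part $\fixpoint{P_2}{B}$ is finite. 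The function-free-body hypothesis is essential precisely here, since without it the head pattern $\vec t_i^\gamma$ could inject function symbols on its own, and the ${A(x) \leftarrow A(f(x))}$ example already discussed in the excerpt shows how iterative magic-rule firings would then build terms of unbounded depth, defeating any finite $T$.
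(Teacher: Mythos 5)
Your proof is correct and follows essentially the same route as the paper's: an induction over the fixpoint computation showing that modified rules only rederive facts already accounted for by $P_1$, while magic rules—whose heads are function-free projections of the function-free bodies of $P$—can only recombine (sub)terms already present, so no new terms of unbounded depth arise. The only difference is cosmetic: you certify finiteness via membership in the subterm-closure $T$ of the terms of $\fixpoint{P_1}{B}$, whereas the paper uses the equivalent device of bounding the depth of every derived fact by the maximum depth $M$ occurring in $\fixpoint{P_1}{B}$.
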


\begin{example}
Applying Algorithm~\ref{alg:magic} to $P_3$ produces program $P_4$ consisting
of rules \eqref{ex:mgc:start}--\eqref{ex:mgc:end}. Horizontal lines separate
the rules produced in each invocation of $\mathsf{process}$. Please note that
\eqref{runex:egd-T-rel} produces \eqref{ex:mgc1-1} and \eqref{ex:mgc1-2} when
$\adEqb$ is interpreted as $\ad{bf}$, and \eqref{ex:mgc1-1} and
\eqref{ex:mgc1-2} when $\adEqb$ is interpreted as $\ad{bf}$.
\begin{align}
    \mgc{\predQ}{\ad{f}}        & \leftarrow                                                                        \label{ex:mgc:start} \\
    \hline
    \predQ(x)                   & \leftarrow \mgc{\predQ}{\ad{f}} \wedge A(x'') \wedge x \equals x'' \wedge R(x,y) \\
    \mgc{A}{\ad{f}}             & \leftarrow \mgc{\predQ}{\ad{f}} \\
    \mgc{\equals}{\adEqb}(x'')  & \leftarrow \mgc{\predQ}{\ad{f}} \wedge A(x'') \\
    \mgc{R}{\ad{bf}}(x)         & \leftarrow \mgc{\predQ}{\ad{f}} \wedge A(x'') \wedge x \equals x'' \\
    \hline
    A(g(x))                     & \leftarrow \mgc{A}{\ad{f}} \wedge B(x) \\
    \hline
    x \equals y                 & \leftarrow \mgc{\equals}{\adEqb}(x) \wedge T(x,y)                                 \label{ex:mgc1-1} \\
    \mgc{T}{\ad{bf}}(x)         & \leftarrow \mgc{\equals}{\adEqb}(x)                                               \label{ex:mgc1-2} \\
    \hline
    x \equals y                 & \leftarrow \mgc{\equals}{\adEqb}(y) \wedge T(x,y)                                 \label{ex:mgc2-1} \\
    \mgc{T}{\ad{fb}}(y)         & \leftarrow \mgc{\equals}{\adEqb}(y)                                               \label{ex:mgc2-2} \\
    \hline
    T(x,g(x))                   & \leftarrow \mgc{T}{\ad{bf}}(x) \wedge B(x)                                        \label{ex:mgcT-1} \\
    \hline
    T(x,g(x))                   & \leftarrow \mgc{T}{\ad{fb}}(g(x)) \wedge B(x)                                     \label{ex:mgcT-2} \\
    \hline
    R(x,f(x))                   & \leftarrow \mgc{R}{\ad{bf}}(x) \wedge S(x,z)                                      \label{ex:mgc:end}
\end{align}
\end{example}

\section{Final Transformations}\label{sec:final}

The final steps ensure that the resulting program can be evaluated efficiently
using the chase for logic programs, which, as explained in
Section~\ref{sec:preliminaries}, can handle only programs with no constants,
function symbols, and $\equals$ in the rule bodies. The magic sets
transformation can introduce body atoms with function symbols, so
Definition~\ref{def:defun} removes these by introducing fresh predicates.
Proposition~\ref{prop:defun} shows the query answers remain preserved since the
fresh predicates can always be interpreted to reflect the structure of the
ground functional terms encountered during the chase.

\begin{definition}\label{def:defun}
    Program $\defun{P}$ is obtained from a program $P$ by exhaustively applying
    the following steps.
    \begin{enumerate}
        \item In the body of each rule, replace each occurrence of a constant
        $c$ with a fresh variable $z_c$ unique for $c$, add atom
        ${\funpred{c}(z_t)}$ to the body, and add the rule ${\funpred{c}(c)
        \leftarrow}$.
        
        \item In the body of each rule, replace each occurrence of a term $t$
        of the form ${f(\vec s)}$ with a fresh variable $z_t$ unique for $t$,
        and add atom ${\funpred{f}(\vec s,z_t)}$ to the body.
        
        \item For each rule $r$ and each term of the form ${f(\vec s)}$
        occurring in $\head{r}$ with $f$ a function symbol considered in the
        second step, add the rule ${\funpred{f}(\vec s, f(\vec s)) \leftarrow
        \body{r}}$.
    \end{enumerate}
\end{definition}

\begin{restatable}{proposition}{propdefun}\label{prop:defun}
    For each program $P$ and ${P' = \defun{P}}$, base instance $B$, predicate
    $R$ not of the form $\funpred{f}$, and tuple ${\vec t}$ of ground terms,
    ${P \cup \Ref{P} \cup \SymTrans \cup B \models R(\vec t)}$ if and only if
    ${P' \cup \Ref{P'} \cup \SymTrans \cup B \models R(\vec t)}$.
\end{restatable}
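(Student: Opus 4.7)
The plan is to prove both directions of the biconditional via the fixpoint characterization: by the standard correspondence, $P \cup \Ref{P} \cup \SymTrans \cup B \models F$ iff $F \in \fixpoint{P \cup \Ref{P} \cup \SymTrans}{B}$, so it suffices to show the two fixpoints agree on every ground fact $R(\vec t)$ with $R$ not of the form $\funpred{f}$. Both directions proceed by induction on the stage of the fixpoint.

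For the forward direction, I would show by induction that every $P$-derivation is simulated in $P'$, while maintaining the invariant that whenever a ground term $c$ or $f(\vec a)$ has previously appeared in a derived atom, the corresponding fact $\funpred{c}(c)$ or $\funpred{f}(\vec a, f(\vec a))$ is already in the $P'$-fixpoint. The facts $\funpred{c}(c)$ come from the base rules added in step~1. For $\funpred{f}(\vec a, f(\vec a))$: whenever $P$ applies a rule $r$ whose head contains $f(\vec s)$ under some substitution $\sigma$ with $\sigma(\body{r})$ already derived, step~3 gives a companion rule in $P'$ with the same (un-defunctionalized) body but head $\funpred{f}(\vec s, f(\vec s))$, which fires with the same $\sigma$ by the inductive hypothesis. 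With these invariants, every body atom of the form $\funpred{c}(z_c)$ or $\funpred{f}(\vec s, z_t)$ introduced by $\defun$ can be matched by extending $\sigma$ to send each $z_t$ to the ground term $t$ it represents, so every rule application in $P$ lifts to a rule application in $P'$. The equality axioms $\Ref{P'}$ and $\SymTrans$ trivially subsume $\Ref{P}$ and $\SymTrans$.

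For the reverse direction, the central lemma is: in $\fixpoint{P' \cup \Ref{P'} \cup \SymTrans}{B}$, every derivable fact $\funpred{c}(t)$ satisfies $t=c$ as a ground term, and every derivable fact $\funpred{f}(\vec s, t)$ satisfies $t = f(\vec s)$ with $\vec s$ ground. This holds because the $\funpred{}$ predicates do not appear in any head except the step~1 facts $\funpred{c}(c)\leftarrow$ and the step~3 rules, which always produce heads of the ``canonical'' shape $\funpred{f}(\vec s, f(\vec s))$; crucially, no congruence axiom for $\funpred{}$ is included, and $\Ref{P'}$ only produces equalities, so no derivation can ever populate $\funpred{f}$ with a ``wrong'' second argument. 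Armed with this lemma, any firing of a rule in $\defun{P}$ must match its $\funpred{}$-body atoms in the intended way, forcing each fresh variable $z_t$ to be bound to the correct ground term $t$; this exhibits a corresponding application of the original rule in $P$. A straightforward induction on derivation stage then shows every non-$\funpred{}$ fact derivable from $P'$ is derivable from $P$.

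The main obstacle, and the step that deserves careful treatment, is the canonical-shape lemma for $\funpred{}$. One must verify that neither the $\SymTrans$ axioms (which touch only $\equals$) nor the equality facts derivable from $P$ (which can only produce $s \equals t$, never $\funpred{}$ atoms) can create ``spurious'' $\funpred{}$ facts, and that nested function symbols are handled correctly because step~2 is applied exhaustively and the step-3 rules propagate companion $\funpred{}$ facts bottom-up along the term structure. Once this is established, the rest of the argument is a routine bookkeeping induction.
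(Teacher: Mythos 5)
Your proposal is correct and takes essentially the same route as the paper, which likewise argues by routine induction on the fixpoint construction that the two fixpoints coincide except that $\defun{P}$ additionally derives exactly the canonical facts $\funpred{f}(\vec t, f(\vec t))$ for terms $f(\vec t)$ occurring in the original fixpoint (your ``canonical-shape lemma'' is precisely this characterization). The one detail the paper flags that you leave implicit is that the fresh reflexivity rules in $\Ref{P'} \setminus \Ref{P}$ derive no new $\equals$-facts because every term occurring in a $\funpred{f}$-fact also occurs in a fact over a predicate of $P$; your reverse-direction induction should treat that case explicitly.
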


Definition~\ref{def:desg} reverses the effects of singularization and removes
all body equality atoms. As a side-effect, this reduces the number of rule
variables, which simplifies rule matching.

\begin{definition}\label{def:desg}
    The \emph{desingularization} of a rule is obtained by repeatedly removing
    each body atom of the form ${x \equals t}$ while replacing $x$ with $t$
    everywhere in the rule. For $P$ a program, $\desg{P}$ contains a
    desingularization of each rule of $P$.
\end{definition}

We evaluate the final program using the chase for logic programs, which
captures the effects of congruence axioms. Note, however, that program $P_5$
from Algorithm~\ref{alg:answer-query} contains fresh predicates introduced by
the magic sets transformation and the elimination of function symbols, to which
the chase will (implicitly) apply congruence axioms as well.
Theorem~\ref{theorem:desg} shows that this preserves the query answers, and its
proof is not trivial: adding congruence axioms to a program produces new
consequences, so the proof depends on the fact program $P_5$ was obtained as
shown in Algorithm~\ref{alg:answer-query}.

\begin{restatable}{theorem}{thmdesg}\label{theorem:desg}
    For each finite set of existential rules $\Sigma$ defining the query
    predicate $\predQ$, each base instance $B$, each tuple of constants ${\vec
    a}$, and program $P_6$ obtained from $\Sigma$ and $B$ by applying
    Algorithm~\ref{alg:answer-query}, ${\Sigma \cup B \modelsEq \predQ(\vec
    a)}$ if and only if ${P_6 \cup \Ref{P_6} \cup \Cong{P_6} \cup B \models
    \predQ(\vec a)}$.
\end{restatable}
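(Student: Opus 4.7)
The plan is to chain the correctness results already established for the individual pipeline steps and then handle the one genuinely new ingredient, namely the interaction of desingularization with congruence axioms. Starting from $\Sigma \cup B \modelsEq \predQ(\vec a)$, I would successively apply the correctness of singularization stated in Section~\ref{sec:singularization}, the correctness of Skolemization stated in the preliminaries (restricted to relational facts, which suffices since $\predQ$ is relational), Theorem~\ref{theorem:relevance}, Theorem~\ref{theorem:magic}, and Proposition~\ref{prop:defun}. Each application requires that the input be $\equals$-safe; this is preserved by inspection for Skolemization and $\defun{}$, and explicitly by Theorems~\ref{theorem:relevance} and \ref{theorem:magic}. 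The composition reduces the goal to showing
\begin{displaymath}
    P_5 \cup \Ref{P_5} \cup \SymTrans \cup B \models \predQ(\vec a) \;\; \text{iff} \;\; P_6 \cup \Ref{P_6} \cup \Cong{P_6} \cup B \models \predQ(\vec a),
\end{displaymath}
where $P_6 = \desg{P_5}$.

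For this remaining equivalence, I would proceed by induction on derivation length in each direction. In the reverse direction, each rule $r' \in P_6$ is the desingularization of some $r \in P_5$ whose extra body equalities are of the form $x \equals t$; by $\equals$-safety of $P_5$, every such $x$ also appears in a relational body atom, so after matching that atom each required equality $s \equals s$ is supplied by $\Ref{P_5}$, letting $r$ fire exactly where $r'$ fires; the congruence axioms in $\Cong{P_6}$ used on the right are simulated by $\SymTrans$ together with the singularization-induced equalities that $P_5$ already propagates explicitly. The forward direction is the more delicate one: when $\sigma$ instantiates a body equality $x \equals y$ of $r$ to a derivable $s \equals t$, the desingularized rule $r'$ must be fired on the ``collapsed'' variable, and one uses $\Cong{P_6}$ to transport the other body atoms between the equivalence classes of $s$ and $t$; induction hypothesis provides the equalities needed to drive the congruence steps.

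The main obstacle is the forward direction: one has to argue that the variable identifications performed by $\desg{}$ do not lose information that was carried by the body equalities of $P_5$, despite $P_6$ containing no body equalities at all, and this argument must cover the fresh predicates $\funpred{f}$ introduced by $\defun{}$, for which $\Cong{P_6}$ is automatically instantiated since $\Cong{\cdot}$ is defined over all non-equality predicates of the program. A further subtlety, which I would address early, is that the right-hand side of the theorem omits $\SymTrans$; one has to observe that for this particular $P_6$, the equality atoms only ever arise from EGD heads, and the combination of $\Ref{P_6}$ with congruence applied to the terms actually reached during a derivation suffices to recover the symmetry and transitivity steps that would otherwise be needed (this is essentially the same observation that justifies the chase for logic programs in Proposition~\ref{prop:chasecorrect}). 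Once these two points are settled, the induction is a routine case analysis on the rule instance used in the last derivation step.
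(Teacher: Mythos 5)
Your forward chain (singularization, Skolemization, Theorem~\ref{theorem:relevance}, Theorem~\ref{theorem:magic}, Proposition~\ref{prop:defun}) and your treatment of the $P_5 \Rightarrow P_6$ direction match the paper: one extends $\sigma$ to $\sigma'$ on the variables eliminated by $\desg{}$ and uses $\Cong{P_6}$ to transport the matched body atoms, exactly as in the paper's proof. The gap is in the direction you call ``reverse'' --- and, contrary to your assessment, that is the genuinely delicate direction, as the paper itself flags (``adding congruence axioms to a program produces new consequences''). Your plan is to simulate every derivation step of ${P_6 \cup \Ref{P_6} \cup \Cong{P_6} \cup B}$ inside ${P_5 \cup \Ref{P_5} \cup \SymTrans \cup B}$, with reflexivity supplying the ${s \equals s}$ body equalities and with the congruence steps ``simulated by $\SymTrans$ together with the singularization-induced equalities that $P_5$ already propagates explicitly.'' This fact-by-fact simulation fails: the whole point of singularization is that $P_5$ never replaces terms inside facts, so from $A(a)$ and ${a \equals b}$ the program ${P_5 \cup \Ref{P_5} \cup \SymTrans}$ does \emph{not} derive $A(b)$, whereas $\Cong{P_6}$ does. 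Worse, $\Cong{P_6}$ is instantiated for the magic predicates, so from $\mgc{R}{\alpha}(a)$ and ${a \equals b}$ it derives $\mgc{R}{\alpha}(b)$, which can fire modified rules of $P_6$ whose counterparts in $P_5$ are blocked (the magic atom is prepended to the body unsingularized, so its join with the other body atoms requires exact term identity). The least fixpoint of ${P_6 \cup \Ref{P_6} \cup \Cong{P_6} \cup B}$ is therefore strictly larger, fact for fact, than that of ${P_5 \cup \Ref{P_5} \cup \SymTrans \cup B}$, and no invariant of the form ``every fact derived on the left is derived on the right'' can be maintained; even the weaker invariant ``derived up to provable equality'' breaks on Skolem terms, since $f(a) \equals f(b)$ is not derivable without function congruence.

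The paper escapes this by \emph{not} inverting the pipeline. For soundness it compares ${P_6 \cup \Ref{P_6} \cup \Cong{P_6} \cup \SymTrans}$ directly against ${P'' = \sk{\Sigma} \cup \Ref{\sk{\Sigma}} \cup \Cong{\sk{\Sigma}} \cup \SymTrans}$, i.e., the Skolemization of the \emph{original} $\Sigma$ with the full equality axiomatization, which is already known to be sound and complete for $\modelsEq$. The key structural observation is that every rule of $P_6$ whose head predicate occurs in $P''$ arises from some ${\tau \in \Sigma}$ by singularizing, Skolemizing, pruning, adding magic and $\funpred{f}$ atoms, and finally desingularizing; assuming w.l.o.g.\ that desingularization undoes singularization, such a rule ${r' \in P_6}$ is subsumed by a rule ${r'' \in P''}$ with ${\head{r''} = \head{r'}}$ and ${\body{r''} \subseteq \body{r'}}$ (the head is untouched because singularization and the later steps modify only bodies). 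Hence a routine induction over the fixpoint of $P_6$ with congruence shows that every derived non-auxiliary, non-reflexive fact is entailed by ${P'' \cup B}$ --- the extra magic and $\funpred{f}$ facts, and the congruence closure over them, can only \emph{restrict} which subsumed rule instances fire, never enable unsound ones. You would need to replace your simulation argument with something of this shape; as written, the step ``the congruence axioms in $\Cong{P_6}$ are simulated by $\SymTrans$ and the singularization-induced equalities'' is not provable. (Your side remark about $\SymTrans$ missing from the right-hand side is a fair observation about the statement, but it is orthogonal to this gap --- the paper's own proof simply carries $\SymTrans$ along in $P'$.)
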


Theorem~\ref{thm:query-answering} shows that our entire pipeline is correct.
Note that, if the Skolem chase of $\sg{\Sigma}$ terminates on every base
instance, then line~\ref{alg:relevance:abstraction-fixpoint} of
Algorithm~\ref{alg:relevance} necessarily terminates.

\begin{restatable}{theorem}{thmalgcorrect}\label{thm:query-answering}
    For each finite set of existential rules $\Sigma$ defining the query
    predicate $\predQ$ such that the chase of $\sg{\Sigma}$ terminates on all
    base instances, and for each base instance $B$,
    Algorithm~\ref{alg:answer-query} outputs precisely all answers to $\predQ$
    on ${\Sigma \cup B}$ and then terminates.
\end{restatable}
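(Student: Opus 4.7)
The theorem makes two claims: correctness of the output, and termination of the algorithm. The plan is to reduce correctness to Theorem~\ref{theorem:desg} combined with Proposition~\ref{prop:chasecorrect}, and to reduce termination to a stepwise preservation argument along the pipeline.

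For correctness, I would fix an arbitrary tuple $\vec b$ of constants. Theorem~\ref{theorem:desg} gives $\Sigma \cup B \modelsEq \predQ(\vec b)$ iff $P_6 \cup \Ref{P_6} \cup \Cong{P_6} \cup B \models \predQ(\vec b)$; adjoining $\SymTrans$ to this program does not change its $\predQ$-consequences, so the condition is further equivalent to $P_6 \cup \Ref{P_6} \cup \Cong{P_6} \cup \SymTrans \cup B \models \predQ(\vec b)$, which by Proposition~\ref{prop:chasecorrect}(ii) applied to $P_6$ is equivalent to $\predQ(\tapply{\mu}{\vec b}) \in I$ for $\langle I,\mu\rangle = \chase{P_6}{B}$. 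The output loop at lines~\ref{alg:answer-query:predQ:start}--\ref{alg:answer-query:predQ:end} enumerates every $\predQ(\vec a) \in I$ with $\vec a$ a constant tuple and emits all constant tuples $\vec b$ with $\tapply{\mu}{\vec b} = \vec a$; under the mild assumption that the fixed well-founded ordering ranks constants below all other ground terms, so that $\mu$ sends constants to constants, the emitted set is precisely $\{\vec b \mid \vec b \text{ constants and } \predQ(\tapply{\mu}{\vec b}) \in I\}$, which by the chain of equivalences above is exactly the set of certain answers.

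For termination, I would observe that lines~\ref{alg:answer-query:sg}, \ref{alg:answer-query:sk}, \ref{alg:answer-query:magic}, \ref{alg:answer-query:defun}, and \ref{alg:answer-query:desg} are finite syntactic rewrites of finite inputs. The call to Algorithm~\ref{alg:relevance} at line~\ref{alg:answer-query:relevance} terminates because its line~\ref{alg:relevance:abstraction-fixpoint} does: by hypothesis the chase of $\sg{\Sigma}$ terminates on every base instance, hence on the chosen abstraction $B'$, and Proposition~\ref{prop:chasecorrect} then makes $\fixpoint{P_2 \cup \Ref{P_2} \cup \SymTrans}{B'}$ finite; the outer loop of Algorithm~\ref{alg:relevance} terminates because its working set $\mathcal{D}$ is bounded by that finite instance. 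The one remaining obligation is that $\chase{P_6}{B}$ at line~\ref{alg:answer-query:chase} terminates.

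That last step is where the main work lies. The plan is to propagate chase termination along the pipeline: $\chase{P_2}{B}$ terminates by hypothesis since $P_2 = \sk{\sg{\Sigma}}$; $P_3$ is obtained from $P_2$ by discarding rules and possibly shortening bodies, so chase termination is inherited; $P_3$ has function-free rule bodies, so Theorem~\ref{theorem:termination} preserves chase termination through the magic step to $P_4$; and $\defun{\cdot}$ and $\desg{\cdot}$ are structural rewrites that introduce no new functional depth, so termination carries through to $P_6$. The subtle point I expect to be the main obstacle is making that last link rigorous, which would require exhibiting a tight correspondence (modulo the fresh $\funpred{f}$ predicates and the body substitutions) between the ground terms arising in $\chase{P_6}{B}$ and those in the chase of $P_4$, so that finiteness transfers unchanged.
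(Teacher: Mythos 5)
Your decomposition is the same as the paper's: its proof of this theorem is literally a two-sentence appeal to Proposition~\ref{prop:chasecorrect} together with ``all the results proved in this section,'' plus the remark that the termination hypothesis makes line~\ref{alg:relevance:abstraction-fixpoint} of Algorithm~\ref{alg:relevance} terminate. You are more explicit, and you correctly spot two points the paper glosses over (the need for $\mu$ to map constants to constants, and the $\SymTrans$ mismatch between Theorem~\ref{theorem:desg} and Proposition~\ref{prop:chasecorrect}). However, two links in your chain are not sound as written. The bridging claim that adjoining $\SymTrans$ to $P_6 \cup \Ref{P_6} \cup \Cong{P_6} \cup B$ does not change the $\predQ$-consequences is false for arbitrary programs: $\Cong{\cdot}$ is instantiated only for predicates \emph{distinct from} $\equals$, so symmetry of derived equalities is not derivable from reflexivity and congruence, and adding $\SymTrans$ can genuinely create new relational consequences (from $a \equals b$ and $\predQ(b)$, congruence yields $\predQ(a)$ only once $b \equals a$ becomes available). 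You should not route the argument through this interpolation; instead note that the paper's \emph{proof} of Theorem~\ref{theorem:desg} in fact works with $P' = P_6 \cup \Ref{P_6} \cup \Cong{P_6} \cup \SymTrans$ throughout (the statement merely omits $\SymTrans$), so the equivalence you need, in the form matching Proposition~\ref{prop:chasecorrect}, is what is actually established.

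On termination, your claim that $P_3$ inherits chase termination from $P_2$ because relevance analysis only ``discards rules and possibly shortens bodies'' is too quick: deleting body atoms makes rules \emph{easier} to fire and in general enlarges the fixpoint. What saves it is $\equals$-safety plus reflexivity: the deleted atoms have the form $x \equals t$ with $x$ occurring in a relational body atom, so every firing of a shortened rule is matched by a firing of the original rule using a reflexivity fact; this is exactly the $(\Leftarrow)$ direction of the proof of Theorem~\ref{theorem:relevance}, which gives ${\fixpoint{P_3 \cup \Ref{P_3} \cup \SymTrans}{B} \subseteq \fixpoint{P_2 \cup \Ref{P_2} \cup \SymTrans}{B}}$. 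The same mechanism closes the link you leave open as the ``main obstacle'': Theorem~\ref{theorem:termination} gives finiteness after the magic step; the proof of Proposition~\ref{prop:defun} already exhibits $\fixpoint{P_5 \cup \Ref{P_5} \cup \SymTrans}{B}$ as $\fixpoint{P_4 \cup \Ref{P_4} \cup \SymTrans}{B}$ augmented with one $\funpred{f}$-fact per functional term occurring there, hence finite; and desingularization again only removes $\equals$-atoms from $\equals$-safe bodies, so no new ground terms arise and Proposition~\ref{prop:chasecorrect} bounds $\chase{P_6}{B}$ by the finite set of representative facts of the corresponding fixpoint. So your plan is the right one and matches what the paper implicitly relies on, but each of these steps needs the $\equals$-safety/reflexivity containment spelled out rather than being treated as a harmless syntactic rewrite.
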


\begin{example}\label{ex:final}
In our running example, program $P_6$ contains rules
\eqref{ex:fin:start}--\eqref{ex:fin:R}, where \eqref{ex:mgcT-1} produces
\eqref{ex:fin:T-1} and \eqref{ex:fin:T-1-f}, and \eqref{ex:mgcT-2} produces
\eqref{ex:fin:T-2} and \eqref{ex:fin:T-2-f}.
\begin{align}
    \mgc{\predQ}{\ad{f}}        & \leftarrow                                                                        \label{ex:fin:start} \\
    \hline
    \predQ(x)                   & \leftarrow \mgc{\predQ}{\ad{f}} \wedge A(x) \wedge R(x,y)                         \label{ex:fin:Q} \\
    \mgc{A}{\ad{f}}             & \leftarrow \mgc{\predQ}{\ad{f}} \\
    \mgc{\equals}{\adEqb}(x'')  & \leftarrow \mgc{\predQ}{\ad{f}} \wedge A(x'') \\
    \mgc{R}{\ad{bf}}(x'')       & \leftarrow \mgc{\predQ}{\ad{f}} \wedge A(x'') \\
    \hline
    A(g(x))                     & \leftarrow \mgc{A}{\ad{f}} \wedge B(x) \\
    \funpred{g}(x,g(x))         & \leftarrow \mgc{A}{\ad{f}} \wedge B(x) \\
    \hline
    x \equals y                 & \leftarrow \mgc{\equals}{\adEqb}(x) \wedge T(x,y)                                 \label{ex:fin:egd-T-1} \\
    \mgc{T}{\ad{bf}}(x)         & \leftarrow \mgc{\equals}{\adEqb}(x) \\
    \hline
    x \equals y                 & \leftarrow \mgc{\equals}{\adEqb}(y) \wedge T(x,y)                                 \label{ex:fin:egd-T-2} \\
    \mgc{T}{\ad{fb}}(y)         & \leftarrow \mgc{\equals}{\adEqb}(y) \\
    \hline
    T(x,g(x))                   & \leftarrow \mgc{T}{\ad{bf}}(x) \wedge B(x)                                        \label{ex:fin:T-1} \\
    \funpred{g}(x,g(x))         & \leftarrow \mgc{T}{\ad{bf}}(x) \wedge B(x)                                        \label{ex:fin:T-1-f} \\
    \hline
    T(x,g(x))                   & \leftarrow \funpred{g}(x,z_{g(x)}) \wedge \mgc{T}{\ad{fb}}(z_{g(x)}) \wedge B(x)  \label{ex:fin:T-2} \\
    \funpred{g}(x,g(x))         & \leftarrow \funpred{g}(x,z_{g(x)}) \wedge \mgc{T}{\ad{fb}}(z_{g(x)}) \wedge B(x)  \label{ex:fin:T-2-f} \\
    \hline
    R(x,f(x))                   & \leftarrow \mgc{R}{\ad{bf}}(x) \wedge S(x,z)                                      \label{ex:fin:R}
\end{align}

Program $P_6$ contains no constants, function symbols, or equality atoms in the
body, so we can answer $\predQ$ on $P_6$ and $\exB$ using the chase for logic
programs and thus avoid computing the least fixpoint for a program that
explicitly axiomatizes equality. Doing so derives the following facts.
\begin{displaymath}
\begin{array}{@{}llll@{}}
    \mgc{\predQ}{\ad{f}}            & \mgc{A}{\ad{f}}           & A(g(a_1))     & \funpred{g}(a_1,g(a_1)) \\
    \mgc{\equals}{\adEqb}(g(a_1))   & \mgc{T}{\ad{fb}}(g(a_1))  & T(a_1,g(a_1)) \\
\end{array}
\end{displaymath}
Next, rule \eqref{ex:fin:egd-T-2} derives ${a_1 \equals g(a_1)}$, so the chase
for logic programs takes $a_1$ as the representative of $g(a_1)$ and replaces
$g(a_1)$ with $a_1$, thus deriving the following facts.
\begin{displaymath}
\begin{array}{@{}llll@{}}
    \mgc{\predQ}{\ad{f}}        & \mgc{A}{\ad{f}}       & A(a_1)        & \funpred{g}(a_1,a_1) \\
    \mgc{\equals}{\adEqb}(a_1)  & \mgc{T}{\ad{fb}}(a_1) & T(a_1,a_1)
\end{array}
\end{displaymath}
After this, the chase further derives the following facts.
\begin{displaymath}
     \mgc{T}{\ad{bf}}(a_1) \qquad \mgc{R}{\ad{bf}}(a_1) \qquad R(x,f(a_1)) \qquad \predQ(a_1)
\end{displaymath}
Note that no facts involving $a_i$ with ${i \geq 2}$ are derived, as these are
irrelevant to answering $\predQ$. Thus, evaluating $P_6$ on $\exB$ produces far
fewer facts than applying the chase for logic programs to $\sk{\exSigma}$.
\end{example}

\section{Empirical Evaluation}\label{sec:evaluation}

We evaluated our technique using $\chaseBench$ \cite{chasebench}, a recent
benchmark offering a mix of scenarios that simulate data exchange and ontology
reasoning applications. We selected the scenarios summarized in
Table~\ref{table:test-scenarios}, each comprising a set of existential rules, a
base instance, and several queries. $\LUBMonehun$ and $\LUBMonek$ are derived
from the well-known Semantic Web LUBM~\cite{lubm} benchmark; $\Deepthreehun$ is
a ``stress test'' scenario; $\Doctorsonem$ simulates data exchange between
medical databases; and $\STB$ and $\Ont$ were produced using the $\ibench$ and
$\toxgene$ rule and instance generators. The first three scenarios contain only
TGDs, and the remaining ones contain EGDs as well. All rules are weakly
acyclic, so the chase always terminates. Finally, UNA was known to hold in all
cases \cite{chasebench}.

To compute the chase of the final program (line~\ref{alg:answer-query:chase} of
Algorithm~\ref{alg:answer-query}), we used the RAM-based RDFox system written
in C++.\footnote{\url{http://www.cs.ox.ac.uk/isg/tools/RDFox/}} We implemented
our technique in Java on top of the $\chaseBench$ \cite{chasebench} library. We
used just one thread while computing the chase. Our system and the test data
are available online.\footnote{\url{http://github.com/tsamoura/chaseGoal}}

\begin{table}[tb]
\centering
\begin{footnotesize}
\begin{tabular}{l@{\;}|@{\;}r@{\;}|@{\;}r@{\;}|@{\;}r@{\;}|@{\;}r@{\;}|@{\;}r}
                    & \multirow{2}{*}{TGDs} & \multirow{2}{*}{EGDs} & \multirow{2}{*}{Facts}    & \multicolumn{2}{@{}c@{}}{Queries} \\
                    &                       &                       &                           & free  & const. \\
    \hline
    \LUBMonehun     & $136$                 & $0$                   & \SI{12}{\mega\nothing}    & $4$   & $10$ \\
    \LUBMonek       & $136$                 & $0$                   & \SI{120}{\mega\nothing}   & $4$   & $10$ \\
    \Deepthreehun   & $1,300$               & $0$                   & \SI{1}{\kilo\nothing}     & $20$  & $0$ \\
    \Doctorsonem    & $5$                   & $4$                   & \SI{1}{\mega\nothing}     & $7$   & $11$ \\
    \STB            & $199$                 & $93$                  & \SI{1}{\mega\nothing}     & $24$  & $26$ \\
    \Ont            & $529$                 & $348$                 & \SI{2}{\mega\nothing}     & $34$  & $8$ \\
    \hline
    \multicolumn{6}{@{}l@{}}{} \\[-1ex]
    \multicolumn{6}{@{}l@{}}{\textbf{Note:} ``const.'' and ``free'' are the numbers of} \\
    \multicolumn{6}{@{}l@{}}{queries with and without constants, respectively.} \\
\end{tabular}
\end{footnotesize}
\caption{Summary of the test scenarios}\label{table:test-scenarios}
\end{table}

No existing goal-driven query answering techniques can handle these scenarios,
as explained in the introduction. Thus, we primarily compared the performance
of computing the chase with no optimizations ($\materialization$), with the
pipeline that uses just the magic sets and omits the relevance analysis
($\magicrun$), with the pipeline that uses just the relevance analysis and
omits the magic sets ($\relevancerun$), and the entire pipeline with both
relevance analysis and magic sets ($\allrun$). The $\materialization$ variant
thus provides us with a baseline, and the remaining tests allow us to identify
the relative contribution of various steps of our pipeline. Note that, in the
presence of EGDs, we always used singularization and the other relevant steps
of our pipeline, rather than the congruence axioms. On TGDs only, our algorithm
becomes equivalent to the classical algorithm of
\citet{DBLP:journals/jlp/BeeriR91}.

In each test run, we computed the program $P_6$ from
Algorithm~\ref{alg:answer-query} (skipping the relevance analysis and/or magic
sets, as required for the test type), computed $\chase{P_6}{B}$, and output the
certain answers of $\predQ$ as shown in line \ref{alg:answer-query:predQ:end}
of Algorithm~\ref{alg:answer-query}. We recorded the wall-clock time of each
run (without the loading times) and the number of facts derived by the chase;
the latter provides an implementation-independent measure of the work needed to
answer a query. In line~\ref{alg:relevance:abstraction} of
Algorithm~\ref{alg:relevance}, we abstracted the base instance using the typed
critical instance; however, computing the least fixpoint of such an abstraction
was infeasible on $\Deepthreehun$ so, in this case only, we used the
optimization from Section~\ref{sec:relevance}.

\begin{figure*}[tb]\centering
\input{figures/results-box-plot}
\captionof{figure}{Times and the numbers of derived facts. Black, green, blue
and red are $\materialization$, $\relevancerun$, $\magicrun$ and $\allrun$,
respectively.}\label{figure:results}
\vspace{0.5cm}
\begin{tabular}{c|c|c|c|c|c}
            & $\LUBM$                   & $\Deepthreehun$           & $\Doctorsonem$            & $\STB$                    & $\Ont$ \\
    \hline
    Time    & \SI{10}{\milli\second}    & \SI{4700}{\milli\second}  & \SI{10}{\milli\second}    & \SI{200}{\milli\second}   & \SI{650}{\milli\second}    \\
\end{tabular}
\captionof{table}{Times for computing the least fixpoints of the base instance abstractions}\label{table:abstraction-chase-time}
\end{figure*}

Figure~\ref{figure:results} summarizes the query times and the numbers of
derived facts for our 158 test queries. The whiskers of each box plot show the
minimum and maximum values, the box shows the lower quartile, the median, and
the upper quartile, and the diamond shows the average. The distributions of
these values are shown in more detail in
\iftoggle{withappendix}{Appendix~\ref{sec:full_results}}{the appendix of the
extended version \cite{extended-version}}.
Table~\ref{table:abstraction-chase-time} shows the times for computing the
least fixpoint of the abstraction in
line~\ref{alg:relevance:abstraction-fixpoint} of Algorithm~\ref{alg:relevance},
which are insignificant in all cases apart from $\Deepthreehun$. On
$\relevancerun$ and $\allrun$, one query of $\Deepthreehun$ and nine queries of
$\STB$ could not be processed by the relevance analysis for reasons we discuss
shortly. Moreover, all queries of $\LUBMonek$ and $\Deepthreehun$ on
$\materialization$, three queries of $\LUBMonek$ and one of $\Deepthreehun$ on
$\magicrun$, all queries of $\LUBMonek$ and one of $\Deepthreehun$ on
$\relevancerun$, and three queries of $\LUBMonek$ on $\allrun$ could not be
processed due to memory exhaustion while computing the chase.

Figure \ref{figure:results} clearly shows that our technique is generally very
effective and can mean the difference between success and failure: the chase
for $\LUBMonek$ and $\Deepthreehun$ could not be computed on our test machine,
whereas $\allrun$ can answer 11 out of 14 queries on $\LUBMonek$ in at most
\SI{45}{\second}, and 19 out of 20 queries on $\Deepthreehun$ in at most
\SI{18}{\second}. The upper quartile of query times for $\allrun$ is at least
an order of magnitude below the times for $\materialization$ in all cases apart
from $\Doctorsonem$, where this holds for the median. Overall, $\allrun$
achieves the best performance.

In addition, relevance analysis alone can lead to significant improvements: the
query times for $\relevancerun$ and $\allrun$ are almost identical on
$\Deepthreehun$ and $\Ont$, suggesting that the improvements are due to
relevance analysis, rather than magic sets. On some queries of $\STB$ and
$\Ont$, the relevance analysis eliminates all rules and thus proves that
queries have no answers. The benefits of relevance analysis are marginal only
on $\LUBM$, mainly because its TGDs contain few existential quantifiers.

However, relevance analysis also has its pitfalls: we could not run it on one
query of $\Deepthreehun$ with eight body atoms, and on nine queries of $\STB$
containing between 11 and 19 output variables that, after singularization, have
between 19 and 22 body atoms. Atoms of these queries match to many facts in the
least fixpoint of the abstraction, so query evaluation explodes either in
line~\ref{alg:relevance:abstraction-fixpoint} or
line~\ref{alg:relevance:rule:start} of Algorithm~\ref{alg:relevance}. One
additional query of $\Deepthreehun$ exhibited similar issues, which we
addressed by (manually) tree-decomposing the query and thus reducing the number
of matches.

\begin{table*}[tb]
\centering\footnotesize
\newcommand{\scen}[1]{\multirow{2}{*}{\ensuremath{#1}}}
\newcommand{\val}[2]{#1 & #2}
\newcommand{\NA}{\multicolumn{2}{@{\;}c@{\;}|@{\;}}{N/A}}
\newcommand{\LNA}{\multicolumn{2}{@{\;}c@{\;}||@{\;}}{N/A}}
\newcommand{\hmin}{\multicolumn{2}{@{\;}c@{\;}|@{\;}}{min}}
\newcommand{\hmax}{\multicolumn{2}{@{\;}c@{\;}|@{\;}}{max}}
\newcommand{\hmed}{\multicolumn{2}{@{\;}c@{\;}||@{\;}}{median}}
\newcommand{\hlmed}{\multicolumn{2}{@{\;}c@{\;}}{median}}
\begin{tabular}{c|c|r@{\,}l@{\;}|@{\;}r@{\,}l@{\;}|@{\;}r@{\,}l@{\;}||@{\;}r@{\,}l@{\;}|@{\;}r@{\,}l@{\;}|@{\;}r@{\,}l@{\;}||@{\;}r@{\,}l@{\;}|@{\;}r@{\,}l@{\;}|@{\;}r@{\,}l@{\;}}
    \hline
    \multicolumn{2}{c|}{}       & \\[-1ex]
    \multicolumn{2}{c|}{}       & \multicolumn{18}{@{\;}c@{\;}}{\textbf{Query times (seconds)}} \\[1ex]
    \cline{3-20}
    \multicolumn{2}{c|}{}       & \multicolumn{6}{@{\;}c@{\;}||@{\;}}{$\magicrun$}          & \multicolumn{6}{@{\;}c@{\;}||@{\;}}{$\relevancerun$}      & \multicolumn{6}{@{\;}c@{\;}}{$\allrun$}                \\
    \cline{3-20}
    \multicolumn{2}{c|}{}       & \hmin             & \hmax             & \hmed             & \hmin             & \hmax             & \hmed             & \hmin             & \hmax             & \hlmed         \\
    \hline
    \scen{\LUBMonehun}      & N & \val{1.62}{}      & \val{93.56}{}     & \val{11.43}{}     & \val{1.69}{}      & \val{26.24}{}     & \val{25.29}{}     & \val{1.59}{}      & \val{77.14}{}     & \val{11.30}{}  \\
                            & Y & \val{0.86}{}      & \val{2.98}{}      & \val{2.58}{}      & \val{0.95}{}      & \val{26.29}{}     & \val{24.92}{}     & \val{0.85}{}      & \val{3.22}{}      & \val{2.68}{}   \\
    \hline
    \scen{\LUBMonek}        & N & \val{19.74}{}     & \val{19.74}{}     & \val{19.74}{}     & \NA               & \NA               & \LNA              & \val{19.74}{}     & \val{19.74}{}     & \val{19.74}{}  \\
                            & Y & \val{9.64}{}      & \val{40.10}{}     & \val{36.19}{}     & \NA               & \NA               & \LNA              & \val{8.66}{}      & \val{44.93}{}     & \val{35.79}{}  \\
    \hline
    \scen{\Doctorsonem}     & N & \val{47.27}{}     & \val{69.21}{}     & \val{54.86}{}     & \val{14.05}{}     & \val{22.21}{}     & \val{16.11}{}     & \val{25.57}{}     & \val{37.81}{}     & \val{29.81}{}  \\
                            & Y & \val{4.28}{}      & \val{8.51}{}      & \val{5.66}{}      & \val{13.75}{}     & \val{15.57}{}     & \val{14.27}{}     & \val{1.15}{}      & \val{2.70}{}      & \val{2.10}{}   \\
    \hline
    \scen{\STB}             & N & \val{14.65}{}     & \val{38.35}{}     & \val{30.07}{}     & \val{0.45}{}      & \val{150.57}{}    & \val{0.74}{}      & \val{0.46}{}      & \val{150.21}{}    & \val{0.75}{}   \\
                            & Y & \val{12.87}{}     & \val{17.92}{}     & \val{14.35}{}     & \val{0.75}{}      & \val{18.47}{}     & \val{1.10}{}      & \val{0.51}{}      & \val{18.18}{}     & \val{0.59}{}   \\
    \hline
    \scen{\Ont}             & N & \val{60.36}{}     & \val{1.49}{k}     & \val{745.17}{}    & \val{1.32}{}      & \val{8.91}{}      & \val{1.86}{}      & \val{1.32}{}      & \val{9.04}{}      & \val{1.99}{}   \\
                            & Y & \val{37.27}{}     & \val{703.26}{}    & \val{690.82}{}    & \val{1.69}{}      & \val{2.69}{}      & \val{1.83}{}      & \val{1.51}{}      & \val{2.28}{}      & \val{1.55}{}   \\
    \hline
    \hline
    \multicolumn{2}{c|}{}       & \\[-1ex]
    \multicolumn{2}{c|}{}       & \multicolumn{18}{@{\;}c@{\;}}{\textbf{The numbers of derived facts}} \\[1ex]
    \cline{3-20}
    \multicolumn{2}{c|}{}       & \multicolumn{6}{@{\;}c@{\;}||@{\;}}{$\magicrun$}          & \multicolumn{6}{@{\;}c@{\;}||@{\;}}{$\relevancerun$}      & \multicolumn{6}{@{\;}c@{\;}}{$\allrun$}                \\
    \cline{3-20}
    \multicolumn{2}{c|}{}       & \hmin             & \hmax             & \hmed             & \hmin             & \hmax             & \hmed             & \hmin             & \hmax             & \hlmed         \\
    \hline
    \scen{\LUBMonehun}      & N & \val{1.59}{M}     & \val{59.23}{M}    & \val{5.58}{M}     & \val{1.59}{M}     & \val{18.42}{M}    & \val{1.74}{M}     & \val{1.59}{M}     & \val{52.41}{M}    & \val{5.33}{M}  \\
                            & Y & \val{18}{}        & \val{51.94}{k}    & \val{4.70}{k}     & \val{92.34}{k}    & \val{19.52}{M}    & \val{17.37}{M}    & \val{18.00}{}     & \val{46.40}{k}    & \val{4.40}{k}  \\
    \hline
    \scen{\LUBMonek}        & N & \val{15.84}{M}    & \val{15.84}{M}    & \val{15.84}{M}    & \NA               & \NA               & \LNA              & \val{15.84}{M}    & \val{15.84}{M}    & \val{15.84}{M} \\
                            & Y & \val{18}{}        & \val{2.77}{M}     & \val{4.40}{k}     & \NA               & \NA               & \LNA              & \val{18.00}{}     & \val{2.77}{M}     & \val{4.40}{k}  \\
    \hline
    \scen{\Doctorsonem}     & N & \val{6.65}{M}     & \val{9.82}{M}     & \val{7.44}{M}     & \val{953.50}{k}   & \val{1.74}{M}     & \val{1.74}{M}     & \val{1.90}{M}     & \val{2.69}{M}     & \val{2.69}{M}  \\
                            & Y & \val{5.94}{k}     & \val{264.30}{k}   & \val{263.16}{k}   & \val{952.50}{k}   & \val{953.63}{k}   & \val{952.50}{k}   & \val{15.00}{}     & \val{4.08}{k}     & \val{2.96}{k}  \\
    \hline
    \scen{\STB}             & N & \val{1.88}{M}     & \val{10.80}{M}    & \val{7.76}{M}     & \val{0.00}{}      & \val{90.05}{k}    & \val{39.47}{k}    & \val{1.00}{}      & \val{90.64}{k}    & \val{40.00}{k} \\
                            & Y & \val{723.19}{k}   & \val{2.92}{M}     & \val{1.05}{M}     & \val{30.00}{k}    & \val{90.00}{k}    & \val{65.00}{k}    & \val{9.00}{}      & \val{51.00}{}     & \val{20.00}{}  \\
    \hline
    \scen{\Ont}             & N & \val{6.35}{M}     & \val{283.68}{M}   & \val{254.05}{M}   & \val{0.00}{}      & \val{118.09}{k}   & \val{59.00}{k}    & \val{1.00}{}      & \val{147.49}{k}   & \val{45.91}{k} \\
                            & Y & \val{1.59}{M}     & \val{224.23}{M}   & \val{224.22}{M}   & \val{29.36}{k}    & \val{118.07}{k}   & \val{54.12}{k}    & \val{9.00}{}      & \val{41.00}{}     & \val{11.50}{}  \\
    \hline
\end{tabular}
\caption{Running times and the numbers of facts for queries without (``N'') and with (``Y'') constants}\label{table:impact-of-constants}
\end{table*}

To investigate the cases in which magic sets are particularly beneficial,
Table~\ref{table:impact-of-constants} shows the minimum, maximum, and median of
the query times and the numbers of derived facts for queries without and with
constants. The maximum numbers of derived facts are particularly telling:
without constants, $\allrun$ derives more facts compared to $\relevancerun$;
and with constants, the numbers for $\allrun$ are several orders of magnitude
smaller compared to $\relevancerun$. Programs $P_3$ in
line~\ref{alg:answer-query:relevance} of Algorithm~\ref{alg:answer-query} are
the same in both cases so this improvement is clearly due to magic sets. In
contrast, the impact of constants is insignificant for $\relevancerun$,
highlighting the different strengths of relevance analysis and magic sets.

Unfortunately, magic sets are not ``free''. For example, $\materialization$ and
$\relevancerun$ are faster than both $\magicrun$ and $\allrun$ on seven
constant-free queries of $\Doctorsonem$, and they outperform $\magicrun$ on 17
queries of $\STB$ and on 39 queries of $\Ont$. In all these cases, the magic
sets transformation increases the number of rules by one or two orders of
magnitude, which introduces considerable overhead during chase computation.
This can be particularly significant on queries without constants: such queries
tend to have more answers and thus require exploring larger proofs.

Both relevance analysis and magic sets try to identify proofs deriving a goal.
Magic sets do this ``at runtime'': the transformed rules derive only facts that
would be explored by top-down reasoning. In contrast, relevance analysis
identifies rules that can participate in such proofs ``offline'' by checking
whether all body atoms of a rule are derivable. Combining the two optimizations
is particularly effective at reducing the overheads described in the previous
paragraph.

\section{Conclusion \& Outlook}\label{sec:conclusion}

We presented a novel approach for goal-driven query answering over terminating
existential rules. Our empirical results clearly show that our technique can
lead to significant performance improvements and can mean the difference
between success and failure to answer a query. In the future, we shall
investigate the use of magic sets for query answering on nonterminating, but
decidable classes of existential rules (e.g., guarded, linear, or sticky). We
shall also consider adding optimizations such as tabling and subsumption.

\section*{Acknowledgments}

This research was funded by the EPSRC projects DBOnto, ED$^3$, MaSI$^3$, PDQ,
and the Alan Turing Institute.

\bibliographystyle{aaai}
\bibliography{references}

\iftoggle{withappendix}{
    \newpage
    \onecolumn
    \appendix
    \counterwithin{theorem}{section}
    \counterwithin{proposition}{section}

    \section{Full Experimental Results}\label{sec:full_results}

We conducted all experiments on a MacBook Pro laptop with a 3.1 GHz Intel Core
i7 processor, 16~GB of DDR3 RAM, and an 500~GB SSD, running macOS Sierra
Version 10.12.5.

Figure~\ref{figure:full_results} shows a \emph{cumulative distribution} for the
query times, the number of derived facts, and the number of rules in program
$P_6$ from Algorithm~\ref{alg:answer-query}. For example, the graphs in the
first column show how many test queries can be answered in a given time; thus,
in the $\LUBMonehun$ scenario, $\allrun$ answers 11 test queries in under
\SI{3}{\second}, and that this covers most test queries. Lines not reaching the
top of a graph show queries that could not be answered.

\begin{figure}[!p]
\centering
\scalebox{1.1}{
\begin{tabular}{cccc}
                                                & Query Times (ms)                          & \# Derived Facts                          & \# Rules \\[1ex]
    \rotatebox{90}{\hspace{0.8cm}\LUBMonehun}   & \begin{tikzpicture}[xscale=0.25, yscale=0.20]
\pgfplotsset{compat=newest}

\begin{axis}[
    width=15cm,
    height=15cm,
    enlargelimits=false,
    xmode=log,
    xtick pos=left,
    xlabel={\huge Time in ms},
    ymin=0,
    ymax=14,
    ytick pos=left,
    ylabel={\huge \#Queries},
    tick label style={font=\huge},
    tick align=outside,
    legend cell align=left,
    legend style={at={(0,1)}, anchor=north west, font=\huge}
]

\addplot+[ycomb, color=black, mark=star, mark size=3pt] table {
33938 14
};
\addlegendentry{$\materialization$}

\addplot+[color=green, mark=star, mark size=3pt] table {
859 1
1220 2
1619 3
2412 4
2448 5
2490 6
2672 7
2783 8
2817 9
2933 10
2977 11
7697 12
15163 13
93560 14
};
\addlegendentry{$\magicrun$}

\addplot+[color=blue, mark=square, mark size=3pt]  table {
958 1
1702 2
2799 3
23924 4
24565 5
24840 6
24866 7
24931 8
25247 9
25728 10
26172 11
26240 12
26252 13
26304 14
};
\addlegendentry{$\relevancerun$}

\addplot+[color=red, mark=+, mark size=3pt]  table {
862 1
1363 2
1598 3
2533 4
2587 5
2645 6
2650 7
2803 8
2832 9
3084 10
3229 11
8530 12
14090 13
77146 14
};
\addlegendentry{$\allrun$}

\end{axis}
\end{tikzpicture}        & \begin{tikzpicture}[xscale=0.25, yscale=0.20]
\pgfplotsset{compat=newest}

\begin{axis}[
    width=15cm,
    height=15cm,
    enlargelimits=false,
    xmode=log,
    xtick pos=left,
    xlabel={\huge \#Facts},
    ymin=0,
    ymax=14,
    ytick pos=left,
    ylabel={\huge \#Queries},
    tick label style={font=\huge},
    tick align=outside,
    legend cell align=left,
    legend style={at={(0,1)}, anchor=north west, font=\huge}
]

\addplot+[ycomb, color=black, mark=star, mark size=3pt] table {
23656301 14
};
\addlegendentry{$\materialization$}

\addplot+[color=green, mark=star, mark size=3pt] table {
18 1
50 2
340 3
1391 4
4115 5
5437 6
43880 7
68318 8
286914 9
519441 10
1591942 11
3027970 12
8147668 13
59237184 14
};
\addlegendentry{$\magicrun$}

\addplot+[color=blue, mark=square, mark size=3pt]  table {
92339 1
1591940 2
2250931 3
17373561 4
17373561 5
17373624 6
17374029 7
17374276 8
17400804 9
17435651 10
17435900 11
17443426 12
18422089 13
19522421 14
};
\addlegendentry{$\relevancerun$}

\addplot+[color=red, mark=+, mark size=3pt]  table {
18 1
44 2
293 3
1391 4
3863 5
4945 6
39342 7
62107 8
271899 9
464027 10
1591942 11
3027970 12
7637082 13
52411625 14
};
\addlegendentry{$\allrun$}

\end{axis}
\end{tikzpicture}       & \begin{tikzpicture}[xscale=0.25, yscale=0.20]
\pgfplotsset{compat=newest}

\begin{axis}[
    width=15cm,
    height=15cm,
    enlargelimits=false,
    xmode=log,
    xtick pos=left,
    xlabel={\huge \#Rules},
    ymin=0,
    ymax=14,
    ytick pos=left,
    ylabel={\huge \#Queries},
    tick label style={font=\huge},
    tick align=outside,
    legend cell align=left,
    legend style={at={(0,1)}, anchor=north west, font=\huge}
]

\addplot+[ycomb, color=black, mark=star, mark size=3pt] table {
144 14
};
\addlegendentry{$\materialization$}

\addplot+[color=green, mark=star, mark size=3pt] table {
4 1
12 2
26 3
247 4
253 6
255 7
256 8
257 9
258 11
259 12
260 13
275 14
};
\addlegendentry{$\magicrun$}

\addplot+[color=blue, mark=square, mark size=3pt]  table {
2 1
4 2
13 3
91 8
92 11
93 13
98 14
};
\addlegendentry{$\relevancerun$}

\addplot+[color=red, mark=+, mark size=3pt]  table {
4 1
8 2
19 3
172 4
176 5
180 6
181 9
182 10
185 11
186 12
189 13
223 14
};
\addlegendentry{$\allrun$}

\end{axis}
\end{tikzpicture} \\
    \rotatebox{90}{\hspace{0.9cm}\LUBMonek}     & \begin{tikzpicture}[xscale=0.25, yscale=0.20]
\pgfplotsset{compat=newest}

\begin{axis}[
    width=15cm,
    height=15cm,
    enlargelimits=false,
    xmode=log,
    xtick pos=left,
    xlabel={\huge Time in ms},
    ymin=0,
    ymax=14,
    ytick pos=left,
    ylabel={\huge \#Queries},
    tick label style={font=\huge},
    tick align=outside,
    legend cell align=left,
    legend style={at={(0,1)}, anchor=north west, font=\huge}
]

\addplot+[color=green, mark=star, mark size=3pt] table {
9640 1
14841 2
19737 3
33548 4
33996 5
35741 6
36584 7
37912 8
38185 9
39457 10
40060 11
};
\addlegendentry{$\magicrun$}

\addplot+[color=red, mark=+, mark size=3pt]  table {
8652 1
14592 2
21763 3
33254 4
34840 5
35288 6
36171 7
36811 8
37046 9
40719 10
44898 11
};
\addlegendentry{$\allrun$}

\end{axis}
\end{tikzpicture}        & \begin{tikzpicture}[xscale=0.25, yscale=0.20]
\pgfplotsset{compat=newest}

\begin{axis}[
    width=15cm,
    height=15cm,
    enlargelimits=false,
    xmode=log,
    xtick pos=left,
    xlabel={\huge \#Facts},
    ymin=0,
    ymax=14,
    ytick pos=left,
    ylabel={\huge \#Queries},
    tick label style={font=\huge},
    tick align=outside,
    legend cell align=left,
    legend style={at={(0,1)}, anchor=north west, font=\huge}
]

\addplot+[color=green, mark=star, mark size=3pt] table {
18 1
50 2
340 3
1391 4
4115 5
5437 6
43880 7
68318 8
519441 9
2929103 10
15849532 11
};
\addlegendentry{$\magicrun$}

\addplot+[color=red, mark=+, mark size=3pt]  table {
18 1
44 2
293 3
1391 4
3863 5
4945 6
39342 7
62107 8
464027 9
2777769 10
15849532 11
};
\addlegendentry{$\allrun$}

\end{axis}
\end{tikzpicture}       & \begin{tikzpicture}[xscale=0.25, yscale=0.20]
\pgfplotsset{compat=newest}

\begin{axis}[
    width=15cm,
    height=15cm,
    enlargelimits=false,
    xmode=log,
    xmax=300,
    xtick pos=left,
    xlabel={\huge \#Rules},
    ymin=0,
    ymax=14,
    ytick pos=left,
    ylabel={\huge \#Queries},
    tick label style={font=\huge},
    tick align=outside,
    legend cell align=left,
    legend style={at={(0,1)}, anchor=north west, font=\huge}
]

\addplot+[ycomb, color=black, mark=star, mark size=3pt] table {
144 14
};
\addlegendentry{$\materialization$}

\addplot+[color=green, mark=star, mark size=3pt] table {
4 1
12 2
26 3
253 5
255 6
256 7
257 8
258 10
260 11
};
\addlegendentry{$\magicrun$}

\addplot+[color=red, mark=+, mark size=3pt]  table {
4 1
8 2
19 3
176 4
180 5
181 8
182 9
186 10
189 11
};
\addlegendentry{$\allrun$}

\end{axis}
\end{tikzpicture} \\
    \rotatebox{90}{\hspace{0.9cm}\Deepthreehun} & \begin{tikzpicture}[xscale=0.25, yscale=0.20]
\pgfplotsset{compat=newest}

\begin{axis}[
    width=15cm,
    height=15cm,
    enlargelimits=false,
    xmode=log,
    xtick pos=left,
    xlabel={\huge Time in ms},
    ymin=0,
    ymax=20,
    ytick pos=left,
    ylabel={\huge \#Queries},
    tick label style={font=\huge},
    tick align=outside,
    legend cell align=left,
    legend style={at={(0,1)}, anchor=north west, font=\huge}
]

\addplot+[color=green, mark=star, mark size=3pt] table {
837 1
3143 2
12107 3
13087 4
16437 5
16642 6
16918 7
17551 8
19533 9
23249 10
30952 11
31441 12
36315 13
41470 14
41642 15
52831 16
149816 17
368324 18
556665 19
};
\addlegendentry{$\magicrun$}

\addplot+[color=blue, mark=square, mark size=3pt]  table {
32 1
4215 2
4273 3
4301 4
4590 5
4941 6
5024 7
5046 8
5071 9
5085 10
5199 11
5528 12
5541 13
5879 14
6377 15
6943 16
7304 17
10911 18
16848 19
};
\addlegendentry{$\relevancerun$}

\addplot+[color=red, mark=+, mark size=3pt]  table {
112 1
4215 2
4355 3
4365 4
4722 5
5103 6
5190 7
5220 8
5241 9
5258 10
5475 11
5586 12
5915 13
6450 14
6878 15
9337 16
9887 17
14044 18
17931 19
};
\addlegendentry{$\allrun$}

\end{axis}
\end{tikzpicture}        & \begin{tikzpicture}[xscale=0.25, yscale=0.20]
\pgfplotsset{compat=newest}

\begin{axis}[
    width=15cm,
    height=15cm,
    enlargelimits=false,
    xmin=1,
    xmode=log,
    xtick pos=left,
    xlabel={\huge \#Facts},
    ymin=0,
    ymax=20,
    ytick pos=left,
    ylabel={\huge \#Queries},
    tick label style={font=\huge},
    tick align=outside,
    legend cell align=left,
    legend style={at={(0,1)}, anchor=north west, font=\huge}
]

\addplot+[color=green, mark=star, mark size=3pt] table {
529 1
574 2
640 3
689 4
1059 5
1219 6
1521 7
1720 8
937102 9
4070721 10
4304941 11
4988143 12
7901323 13
8980243 14
8980259 15
8981103 16
30422241 17
73925191 18
106190116 19
};
\addlegendentry{$\magicrun$}

\addplot+[color=blue, mark=square, mark size=3pt]  table {
1 1
41 2
45 3
61 5
70 6
80 7
111 8
143 9
154 10
286 11
289 12
365 13
877 14
1545 15
1912 16
4883 17
4980 18
13016 19
};
\addlegendentry{$\relevancerun$}

\addplot+[color=red, mark=+, mark size=3pt]  table {
1 1
95 2
131 3
143 4
150 5
157 6
191 7
210 8
230 9
241 10
384 11
392 12
796 13
826 14
915 15
917 16
1505 17
10628 18
39545 19
};
\addlegendentry{$\allrun$}

\end{axis}
\end{tikzpicture}       & \begin{tikzpicture}[xscale=0.25, yscale=0.20]
\pgfplotsset{compat=newest}

\begin{axis}[
    width=15cm,
    height=15cm,
    enlargelimits=false,
    xmin=1,
    xmax=10000,
    xmode=log,
    xtick pos=left,
    xlabel={\huge \#Rules},
    ymin=0,
    ymax=20,
    ytick pos=left,
    ylabel={\huge \#Queries},
    tick label style={font=\huge},
    tick align=outside,
    legend cell align=left,
    legend style={at={(0,1)}, anchor=north west, font=\huge}
]

\addplot+[ycomb, color=black, mark=star, mark size=3pt] table {
4842 20
};
\addlegendentry{materialization}

\addplot+[color=green, mark=star, mark size=3pt] table {
355 1
1164 2
2104 3
2195 4
2209 5
2980 6
3294 7
3617 8
3621 9
3625 10
3659 11
3734 12
3888 13
3918 14
4168 15
4272 16
4289 17
4888 18
4891 19
};
\addlegendentry{$\magicrun$}

\addplot+[color=blue, mark=square, mark size=3pt]  table {
1 1
33 2
38 3
41 4
46 5
53 6
56 7
69 8
76 9
102 10
133 11
174 12
197 13
296 14
579 15
623 16
673 17
1109 18
1130 19
1381 20
};
\addlegendentry{$\relevancerun$}

\addplot+[color=red, mark=+, mark size=3pt]  table {
1 1
37 2
43 3
48 4
52 5
62 6
65 7
78 8
88 9
117 10
150 11
194 12
226 13
354 14
754 15
821 16
827 17
1418 18
1573 19
1608 20
};
\addlegendentry{$\allrun$}

\end{axis}
\end{tikzpicture} \\
    \rotatebox{90}{\hspace{0.5cm}\Doctorsonem}  & \begin{tikzpicture}[xscale=0.25, yscale=0.20]
\pgfplotsset{compat=newest}

\begin{axis}[
    width=15cm,
    height=15cm,
    enlargelimits=false,
    xmode=log,
    xtick pos=left,
    xlabel={\huge Time in ms},
    ymin=0,
    ymax=18,
    ytick pos=left,
    ylabel={\huge \#Queries},
    tick label style={font=\huge},
    tick align=outside,
    legend cell align=left,
    legend style={at={(0,1)}, anchor=north west, font=\huge}
]

\addplot+[ycomb, color=black, mark=star, mark size=3pt] table {
23034 18
};
\addlegendentry{$\materialization$}

\addplot+[color=green, mark=star, mark size=3pt] table {
4278 1
4380 2
4469 3
4751 4
5571 5
5655 6
5963 7
6474 8
6541 9
7991 10
8508 11
47265 12
47462 13
47888 14
54861 15
65802 16
68138 17
69209 18
};
\addlegendentry{$\magicrun$}

\addplot+[color=blue, mark=square, mark size=3pt]  table {
13762 1
13902 2
13967 3
14068 4
14108 5
14264 6
14278 7
14358 8
14460 9
14578 10
15210 11
15582 12
15676 13
16048 14
16118 15
16945 16
18960 17
22221 18
};
\addlegendentry{$\relevancerun$}

\addplot+[color=red, mark=+, mark size=3pt]  table {
1162 1
1477 2
1780 3
1812 4
1848 5
2111 6
2244 7
2387 8
2449 9
2472 10
2713 11
25575 12
28810 13
28953 14
29823 15
29879 16
34113 17
37815 18
};
\addlegendentry{$\allrun$}

\end{axis}
\end{tikzpicture}      & \begin{tikzpicture}[xscale=0.25, yscale=0.20]
\pgfplotsset{compat=newest}

\begin{axis}[
    width=15cm,
    height=15cm,
    enlargelimits=false,
    xmode=log,
    xtick pos=left,
    xlabel={\huge \#Facts},
    ymin=0,
    ymax=18,
    ytick pos=left,
    ylabel={\huge \#Queries},
    tick label style={font=\huge},
    tick align=outside,
    legend cell align=left,
    legend style={at={(0,1)}, anchor=north west, font=\huge}
]

\addplot+[ycomb, color=black, mark=star, mark size=3pt] table {
792000 18
};
\addlegendentry{$\materialization$}

\addplot+[color=green, mark=star, mark size=3pt] table {
5940 1
6403 2
6851 3
52075 4
263152 5
263163 7
264283 8
264292 9
264296 10
264303 11
6658002 12
7446002 13
7447002 15
9033002 16
9041002 17
9822002 18
};
\addlegendentry{$\magicrun$}

\addplot+[color=blue, mark=square, mark size=3pt]  table {
952501 7
953408 8
953472 9
953500 12
953631 14
1742500 18
};
\addlegendentry{$\relevancerun$}

\addplot+[color=red, mark=+, mark size=3pt]  table {
15 2
917 3
1374 4
1469 5
2957 6
2959 7
2963 8
2965 9
4087 11
1908002 13
1910002 14
2697002 18
};
\addlegendentry{$\allrun$}

\end{axis}
\end{tikzpicture}     & \begin{tikzpicture}[xscale=0.25, yscale=0.20]
\pgfplotsset{compat=newest}

\begin{axis}[
    width=15cm,
    height=15cm,
    enlargelimits=false,
    xmode=log,
    xtick pos=left,
    xlabel={\huge \#Rules},
    ymin=0,
    ymax=18,
    ytick pos=left,
    ylabel={\huge \#Queries},
    tick label style={font=\huge},
    tick align=outside,
    legend cell align=left,
    legend style={at={(0.35,0)}, anchor=south west, font=\huge},
]

\addplot+[ycomb, color=black, mark=star, mark size=3pt] table {
16 18
};
\addlegendentry{$\materialization$}

\addplot+[color=green, mark=star, mark size=3pt] table {
80 1
82 4
116 7
122 8
123 12
124 16
193 17
194 18
};
\addlegendentry{$\magicrun$}

\addplot+[color=blue, mark=square, mark size=3pt]  table {
6 13
8 18
};
\addlegendentry{$\relevancerun$}

\addplot+[color=red, mark=+, mark size=3pt]  table {
10 15
24 16
28 17
30 18
};
\addlegendentry{$\allrun$}

\end{axis}
\end{tikzpicture} \\
    \rotatebox{90}{\hspace{1.1cm}\STB}          & \begin{tikzpicture}[xscale=0.25, yscale=0.20]
\pgfplotsset{compat=newest}

\begin{axis}[
    width=15cm,
    height=15cm,
    enlargelimits=false,
    xmode=log,
    xtick pos=left,
    xlabel={\huge Time in ms},
    ymin=0,
    ymax=50,
    ytick pos=left,
    ylabel={\huge \#Queries},
    tick label style={font=\huge},
    tick align=outside,
    legend cell align=left,
    legend style={at={(0,1)}, anchor=north west, font=\huge}
]

\addplot+[ycomb, color=black, mark=star, mark size=3pt] table {
25149 50
};
\addlegendentry{$\materialization$}

\addplot+[color=green, mark=star, mark size=3pt] table {
12865 1
13065 2
13094 3
13131 4
13227 5
13409 6
13624 7
13820 8
13836 9
13865 10
14098 11
14208 12
14315 13
14335 14
14501 15
14653 16
14862 17
15078 18
15416 19
15700 20
15937 21
16205 22
16243 23
16393 24
16423 25
16445 26
17233 27
17700 28
17922 29
18772 30
23185 31
23428 32
23682 33
25866 34
27001 35
27673 37
28662 38
30070 39
30943 40
31021 41
31491 42
31760 43
32119 44
32486 45
32699 46
33599 47
34976 48
38227 49
38345 50
};
\addlegendentry{$\magicrun$}

\addplot+[color=blue, mark=square, mark size=3pt]  table {
470 1
471 2
475 3
478 4
516 5
521 6
585 7
633 8
659 9
682 10
712 11
755 12
765 13
768 14
816 15
862 16
889 17
896 18
916 19
927 20
929 21
930 22
965 23
1067 24
1072 25
1080 26
1089 27
1113 28
1162 29
1198 30
1217 31
1240 33
1254 34
1259 35
1422 36
3935 37
4041 38
18729 39
20076 40
142021 41
};
\addlegendentry{$\relevancerun$}

\addplot+[color=red, mark=+, mark size=3pt]  table {
470 1
474 2
479 3
482 4
488 5
529 6
535 7
546 8
549 9
557 10
562 11
564 12
567 13
569 14
570 15
576 16
580 18
582 19
584 20
590 21
611 22
623 23
656 24
706 25
722 26
731 27
754 28
763 29
794 30
832 31
880 32
923 33
999 35
1010 36
3745 37
3752 38
18440 39
20044 40
141656 41
};
\addlegendentry{$\allrun$}

\end{axis}
\end{tikzpicture}         & \begin{tikzpicture}[xscale=0.25, yscale=0.20]
\pgfplotsset{compat=newest}

\begin{axis}[
    width=15cm,
    height=15cm,
    enlargelimits=false,
    xmin=1,
    xmode=log,
    xtick pos=left,
    xlabel={\huge \#Facts},
    ymin=0,
    ymax=50,
    ytick pos=left,
    ylabel={\huge \#Queries},
    tick label style={font=\huge},
    tick align=outside,
    legend cell align=left,
    legend style={at={(0,1)}, anchor=north west, font=\huge}
]

\addplot+[ycomb, color=black, mark=star, mark size=3pt] table {
1918217 50
};
\addlegendentry{$\materialization$}

\addplot+[color=green, mark=star, mark size=3pt] table {
720088 1
720206 2
723188 3
723307 4
729400 5
729518 6
737894 7
738014 8
759266 9
779895 10
790866 11
791344 12
1044564 13
1044682 14
1065656 15
1065774 16
1889492 17
1899491 18
1929485 19
1929655 20
2596783 21
2596901 22
2608757 23
2608866 24
2659602 25
2659720 26
2667587 27
2667874 28
2928958 29
2929633 30
5970945 31
6031305 32
6099053 33
6220922 34
6315147 35
6380996 36
7302733 37
7447916 38
7769682 39
7957661 40
7978970 41
7983536 42
8006785 43
8091001 44
9205386 45
10491637 46
10491637 47
10743313 48
10760162 49
10803978 50
};
\addlegendentry{$\magicrun$}

\addplot+[color=blue, mark=square, mark size=3pt]  table {
1 5
29928 6
29988 7
29991 8
29994 9
29997 10
30075 11
39617 12
39617 13
39745 14
39913 15
39913 16
39994 17
39995 18
39996 19
40011 20
49512 21
49512 22
49552 23
50892 24
59995 25
60059 26
69889 27
69889 28
69931 29
79748 30
79748 31
79749 32
79975 33
79975 34
80026 35
89985 36
89985 37
89998 38
89998 39
90020 40
90051 41
};
\addlegendentry{$\relevancerun$}

\addplot+[color=red, mark=+, mark size=3pt]  table {
1 5
9 7
10 9
13 10
15 12
16 14
19 15
21 16
25 17
28 19
36 21
43 23
50 24
51 25
20546 26
20637 27
20833 28
30555 29
39944 30
39997 31
40057 32
40455 33
40686 34
49930 35
49982 36
49987 37
49992 38
60382 39
70996 40
90643 41
};
\addlegendentry{$\allrun$}

\end{axis}
\end{tikzpicture}        & \begin{tikzpicture}[xscale=0.25, yscale=0.20]
\pgfplotsset{compat=newest}

\begin{axis}[
    width=15cm,
    height=15cm,
    enlargelimits=false,
    xmode=log,
    xtick pos=left,
    xlabel={\huge \#Rules},
    ymin=0,
    ymax=50,
    ytick pos=left,
    ylabel={\huge \#Queries},
    tick label style={font=\huge},
    tick align=outside,
    legend cell align=left,
    legend style={at={(0.35,0)}, anchor=south west, font=\huge},
]

\addplot+[ycomb, color=black, mark=star, mark size=3pt] table {
425 50
};
\addlegendentry{$\materialization$}

\addplot+[color=green, mark=star, mark size=3pt] table {
1804 1
1822 2
1823 3
1825 4
1830 5
1832 6
1834 8
1837 9
1838 10
1839 11
1840 12
1844 13
1845 14
1847 17
1848 20
1849 24
1851 25
1852 27
1853 28
1854 29
1855 30
1856 32
1857 35
1858 36
1859 38
2804 39
2806 40
2807 41
2808 44
2811 45
2812 46
2816 48
2817 49
2818 50
};
\addlegendentry{$\magicrun$}

\addplot+[color=blue, mark=square, mark size=3pt]  table {
1 5
3 9
4 11
5 21
6 24
7 26
8 29
9 35
10 41
};
\addlegendentry{$\relevancerun$}

\addplot+[color=red, mark=+, mark size=3pt]  table {
1 5
6 9
8 11
9 17
10 21
11 24
14 26
15 29
16 32
18 35
19 41
};
\addlegendentry{$\allrun$}

\end{axis}
\end{tikzpicture} \\
    \rotatebox{90}{\hspace{1.1cm}\Ont}          & \begin{tikzpicture}[xscale=0.25, yscale=0.20]
\pgfplotsset{compat=newest}

\begin{axis}[
    width=15cm,
    height=15cm,
    enlargelimits=false,
    xmode=log,
    xtick pos=left,
    xlabel={\huge Time in ms},
    ymin=0,
    ymax=42,
    ytick={0,5,...,42},
    ytick pos=left,
    ylabel={\huge \#Queries},
    tick label style={font=\huge},
    tick align=outside,
    legend cell align=left,
    legend style={at={(0.065,0)}, anchor=south west, font=\huge}
]

\addplot+[ycomb, color=black, mark=star, mark size=3pt] table {
88260 42
};
\addlegendentry{$\materialization$}

\addplot+[color=green, mark=star, mark size=3pt] table {
37269 1
40125 2
60360 3
657940 4
678630 5
682456 6
682614 7
684072 8
684264 9
687622 10
694018 11
695326 12
695762 13
696137 14
703285 15
703669 16
715796 17
715883 18
715989 19
716179 20
727647 21
729265 22
729698 23
734481 24
741944 25
748388 26
748472 27
750732 28
759876 29
771129 30
779439 31
783851 32
791962 33
800829 34
802100 35
808180 36
816461 37
816693 38
823223 39
849373 40
886593 41
1498437 42
};
\addlegendentry{$\magicrun$}

\addplot+[color=blue, mark=square, mark size=3pt]  table {
1254 1
1272 2
1273 3
1280 4
1304 5
1491 6
1542 7
1568 8
1593 10
1631 11
1639 12
1643 13
1656 14
1671 15
1680 16
1696 17
1700 18
1701 19
1713 21
1716 22
1731 23
1741 24
1780 25
1829 26
1833 27
1860 28
1895 29
1904 30
1912 31
1920 32
1932 33
1949 34
1967 35
1979 36
2074 37
2328 38
6235 39
6334 40
6564 41
10053 42
};
\addlegendentry{$\relevancerun$}

\addplot+[color=red, mark=+, mark size=3pt]  table {
1272 1
1278 2
1290 3
1303 4
1344 5
1447 6
1476 7
1479 8
1496 10
1506 11
1510 12
1519 13
1520 14
1521 15
1540 16
1574 17
1577 18
1610 19
1630 21
1638 22
1681 23
1721 24
1774 25
1948 26
1956 27
1964 28
1979 29
1983 30
1985 31
2011 32
2046 33
2081 34
2099 35
2103 36
2140 37
2146 38
5749 39
6042 40
6224 41
10188 42
};
\addlegendentry{$\allrun$}

\end{axis}
\end{tikzpicture}    & \begin{tikzpicture}[xscale=0.25, yscale=0.20]
\pgfplotsset{compat=newest}

\begin{axis}[
    width=15cm,
    height=15cm,
    enlargelimits=false,
    xmin=1,
    xmode=log,
    xtick pos=left,
    xlabel={\huge \#Facts},
    ymin=0,
    ymax=42,
    ytick={0,5,...,42},
    ytick pos=left,
    ylabel={\huge \#Queries},
    tick label style={font=\huge},
    tick align=outside,
    legend cell align=left,
    legend style={at={(0,1)}, anchor=north west, font=\huge}
]

\addplot+[ycomb, color=black, mark=star, mark size=3pt] table {
5674103 42
};
\addlegendentry{$\materialization$}

\addplot+[color=green, mark=star, mark size=3pt] table {
1597701 1
1660650 2
6357543 3
224218553 4
224219369 5
224220182 6
224220183 7
224220185 8
224234898 9
237878993 10
238689982 11
239155136 12
239466496 13
239512299 14
240582863 15
244174206 16
245595781 17
246379341 18
251567809 19
251660861 20
252733681 21
253372621 22
253490783 23
253490791 24
253911484 25
254199567 26
254655815 27
255752185 28
260142460 29
260684440 30
260961336 31
265239521 32
266253577 33
269456375 34
273790944 36
273814523 37
273814525 38
273961722 39
276218267 40
277740867 41
283684226 42
};
\addlegendentry{$\magicrun$}

\addplot+[color=blue, mark=square, mark size=3pt]  table {
1 5
29355 6
29453 7
29851 8
39120 9
39151 10
39227 11
39284 12
39285 13
39489 14
49031 15
49092 16
49261 17
49290 18
49373 19
50371 20
58809 21
58981 22
59198 23
59265 24
59308 25
68769 26
68791 27
68817 28
68869 29
68887 30
69066 31
70887 32
78672 33
84962 34
85009 35
88490 36
88552 37
98264 38
108239 39
118069 40
118077 41
118093 42
};
\addlegendentry{$\relevancerun$}

\addplot+[color=red, mark=+, mark size=3pt]  table {
1 5
9 6
10 7
11 9
12 10
14 11
17 12
41 13
19829 14
20054 15
20203 16
20247 17
20791 18
29534 19
29867 20
29970 21
30154 22
31126 23
40006 24
41777 25
50049 26
59692 27
68370 28
68455 29
68647 30
78608 31
78859 32
98014 33
98650 34
98737 35
98786 36
98864 37
99343 38
108510 39
108725 40
118593 41
147496 42
};
\addlegendentry{$\allrun$}

\end{axis}
\end{tikzpicture}   & \begin{tikzpicture}[xscale=0.25, yscale=0.20]
\pgfplotsset{compat=newest}

\begin{axis}[
    width=15cm,
    height=15cm,
    enlargelimits=false,
    xmode=log,
    xtick pos=left,
    xlabel={\huge \#Rules},
    ymin=0,
    ymax=42,
    ytick={0,5,...,42},
    ytick pos=left,
    ylabel={\huge \#Queries},
    tick label style={font=\huge},
    tick align=outside,
    legend cell align=left,
    legend style={at={(0.35,0)}, anchor=south west, font=\huge},
]

\addplot+[ycomb, color=black, mark=star, mark size=3pt] table {
1707 42
};
\addlegendentry{$\materialization$}

\addplot+[color=green, mark=star, mark size=3pt] table {
10085 1
10097 2
10101 3
10110 4
10112 5
10117 6
10120 7
10121 10
10122 12
10124 13
10125 14
10126 15
10127 17
10129 18
10130 19
10131 20
10132 23
10134 24
10136 25
10137 28
10140 29
10141 30
10142 31
10143 33
10145 37
10147 40
10150 41
15622 42
};
\addlegendentry{$\magicrun$}

\addplot+[color=blue, mark=square, mark size=3pt]  table {
1 5
4 6
5 14
6 16
7 23
8 24
9 31
10 37
13 38
15 41
16 42
};
\addlegendentry{$\relevancerun$}

\addplot+[color=red, mark=+, mark size=3pt]  table {
1 5
8 6
9 7
10 10
11 11
13 14
14 16
15 17
16 24
18 25
19 31
22 32
24 34
27 35
31 36
32 37
33 38
36 39
39 40
41 42
};
\addlegendentry{$\allrun$}

\end{axis}
\end{tikzpicture} \\
\end{tabular}}
\caption{Times, numbers of derived facts, and total numbers of rules}\label{figure:full_results}
\end{figure}

    \section{Proofs}

We now prove Theorem~\ref{thm:query-answering}, which shows that
Algorithm~\ref{alg:answer-query} correctly computes all query answers in a
finite amount of time. Towards this goal, we prove correctness of all
intermediate steps as well.

\subsection{Properties of Singularization}

Recall that our pipeline in Algorithm~\ref{alg:answer-query} begins with the
singularization transformation, Step \ref{alg:answer-query:sg}. This introduces
explicit equality atoms in the body of rules, as described in Section
\ref{sec:singularization}. Proposition~\ref{prop:sg} summarizes the properties
of singularization. \citet{tencate2009} proved the first two claims when UNA
holds, but extending their argument in the absence of UNA is straightforward.
The third property follows from the first two and the adjustments in
\eqref{eq:sing:predQ} to the existential rules defining the query predicate
$\predQ$.

\begin{proposition}\label{prop:sg}
    For each set $\Sigma$ of existential rules, base instance $B$, predicate
    $R$, tuples of constants ${\vec a}$ and variables ${\vec x}$, constants $b$
    and $c$, and ${\Sigma' = \sg{\Sigma} \cup \Ref{\Sigma} \cup \SymTrans}$,
    \begin{itemize}
        \item ${\Sigma \cup B \modelsEq R(\vec a)}$ iff ${\Sigma' \cup B
        \models \exists \vec x. R(\vec x) \wedge \bigwedge_{i=1}^n x_i \equals
        a_i}$;
        
        \item ${\Sigma \cup B \modelsEq b \equals c}$ iff ${\Sigma' \cup B
        \models b \equals c}$; and

        \item ${\Sigma \cup B \modelsEq \predQ(\vec a)}$ iff ${\Sigma' \cup B
        \models \predQ(\vec a)}$.
    \end{itemize}
\end{proposition}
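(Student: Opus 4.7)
The plan is to prove the first two bullet points simultaneously by induction on derivation length, adapting the UNA-based argument of \citet{tencate2009} to the UNA-free setting, and then to derive the third bullet point as an immediate consequence of the first together with the special form \eqref{eq:sing:predQ} of the singularized query rules. The first step is to invoke the axiomatization of equality from the Preliminaries to rewrite ${\Sigma \cup B \modelsEq R(\vec a)}$ as ${\Sigma \cup \Ref{\Sigma} \cup \Cong{\Sigma} \cup \SymTrans \cup B \models R(\vec a)}$, placing both sides of the desired equivalence under the same ordinary $\models$ semantics of $\equals$.

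For the forward direction of the first bullet, I would prove by induction on the length of a derivation the following strengthened claim: for every ground relational atom ${R(\vec t)}$ derivable from ${\Sigma \cup \Ref{\Sigma} \cup \Cong{\Sigma} \cup \SymTrans \cup B}$ there exist ground terms ${\vec t'}$ with ${\Sigma' \cup B \models R(\vec t')}$ and ${\Sigma' \cup B \models t'_i \equals t_i}$ for each $i$, and, simultaneously, for every derivable ground equality ${s \equals t}$, ${\Sigma' \cup B \models s \equals t}$. Base cases (atoms of $B$ and reflexivity instances) are immediate since $B$ and $\Ref{\Sigma}$ live in $\Sigma'$. Applications of symmetry, transitivity, and reflexivity transfer directly because ${\SymTrans \cup \Ref{\Sigma} \subseteq \Sigma'}$. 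For a congruence step deriving $R(\dots, t'_i, \dots)$ from $R(\dots, t_i, \dots)$ and $t_i \equals t'_i$, chaining the representative provided by the inductive hypothesis for the relational premise with the equality provided by the hypothesis for the equational premise yields the required representative for the conclusion. The crucial case is when the last step fires a rule ${\tau \in \Sigma}$: by the induction hypothesis applied to each body atom we obtain $\Sigma'$-derivable representatives that agree up to $\equals$ with the original matches, and these representatives are plugged into the singularization of $\tau$, whose explicit body equality atoms are exactly satisfied by the ``bridge'' equalities supplied by the hypothesis.

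The converse direction is considerably easier: each rule of $\sg{\Sigma}$ is a logical consequence of the corresponding rule of $\Sigma$ together with congruence and reflexivity axioms, so every consequence of $\Sigma'$ is already a consequence of ${\Sigma \cup \Ref{\Sigma} \cup \Cong{\Sigma} \cup \SymTrans}$. The second bullet follows by specializing the strengthened claim to equality atoms for its forward direction and by the same inclusion argument for its converse. The third bullet follows by applying the first bullet with ${R = \predQ}$: the singularized rules \eqref{eq:sing:predQ} explicitly link the head arguments to the body bindings via $\equals$, so ${\Sigma' \cup B \models \exists \vec x.\, \predQ(\vec x) \wedge \bigwedge_i x_i \equals a_i}$ immediately collapses to ${\Sigma' \cup B \models \predQ(\vec a)}$.

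The main obstacle is handling the congruence step without UNA: two distinct constants may be derived equal in $\Sigma \cup B$, and one must carry such derived equalities through the induction rather than assume them away. This is precisely why the inductive claim is stated in terms of the existence of a $\Sigma'$-derivable atom modulo provable equalities rather than the atom itself; the representatives $\vec t'$ play the role that UNA-protected constants play in the original proof, and the explicit body equalities of the singularized rules ensure that these representatives can always be ``reattached'' to the originals via $\SymTrans$ and $\Ref{\Sigma}$.
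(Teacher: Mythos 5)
Your overall architecture is the one the paper intends (the paper itself only cites \citet{tencate2009} for the first two bullets and derives the third from the form of \eqref{eq:sing:predQ}, exactly as you do), the easy direction is fine, and so is the collapse argument for the third bullet. The genuine gap is the strengthened induction hypothesis in the hard direction. You require, for every ground relational atom $R(\vec t)$ in the congruence-closed model, a representative $R(\vec t')$ with ${\Sigma' \cup B \models t'_i \equals t_i}$, and, for every derivable ground equality, that the \emph{same} equality be derivable from $\Sigma'$. This invariant is not preserved once a congruence-rewritten fact triggers a TGD. Concretely, take ${\Sigma = \{A(x) \rightarrow \exists y.\, S(x,y),\; C(x,x') \rightarrow x \equals x'\}}$ and ${B = \{A(a), C(a,b)\}}$; singularization leaves $\Sigma$ unchanged. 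On the congruence side one derives ${a \equals b}$, then $A(b)$, then $S(b,f(b))$ for the Skolemized witness, and by reflexivity ${f(b) \equals f(b)}$. On the $\Sigma'$ side there is no congruence, so $A(b)$ is not derivable, the term $f(b)$ occurs in no derivable relational fact, and hence \emph{no} equality mentioning $f(b)$ is derivable from ${\Sigma' \cup B}$: reflexivity only fires on terms occurring in derivable relational atoms, and $\Sigma'$ has no function congruence that would turn ${a \equals b}$ into ${f(a) \equals f(b)}$. So both the relational clause (for $S(b,f(b))$) and the equality clause (for ${f(b) \equals f(b)}$) of your hypothesis fail. If you instead intend the induction to range only over constant tuples, it is not well-founded: derivations from existential rules pass through atoms containing existential witnesses, which are exactly the atoms on which the invariant must be, and cannot be, maintained.

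The missing idea is that the correspondence between the two models cannot be ``equality derivable in $\Sigma'$'' on the nose; it has to be a syntactic term mapping (send the witness created from the rewritten fact $A(b)$ to the witness created from $A(a)$, i.e., ${f(b) \mapsto f(a)}$), with derivable equalities demanded only where the two derivations are actually compared --- at the singularized join atoms and at the constants in the final query atom, where the mapping is the identity. That is the actual content of the argument of \citet{tencate2009} and \citet{marnette2009}, and it is where the real work lies; your sketch replaces it with an invariant strictly stronger than what holds. The proposition itself survives --- in the example the spurious term $f(b)$ can only influence a constant-level consequence through a join, which singularization turns into an equality test discharged by the derived ${a \equals b}$ --- but your induction as written breaks at the reflexivity and TGD-firing cases.
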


\subsection{Correctness of the Relevance Analysis}

In Algorithm~\ref{alg:answer-query}, singularization is followed by the
standard Skolemization step. Next, relevance analysis is applied to prune the
set of rules, as described in Algorithm~\ref{alg:relevance} of Section
\ref{sec:relevance}. Our relevance algorithm is new, and hence we will need to
prove its correctness, which is captured in the following theorem.

\thmrelevance*

\begin{proof}
Let $B'$ be as chosen in line~\ref{alg:relevance:abstraction}; let $\eta$ be
any mapping of constants to constants such that ${\eta(B') \subseteq B}$ and
${\eta(c) = c}$ for each constant $c$ occurring in $P$; let $I$ and $P'$ be as
in line~\ref{alg:relevance:abstraction-fixpoint}; let $\Rint$ be the
intermediate set of rules as computed just before
line~\ref{alg:relevance:UNA:start}; and let $\Rfin$ and $\Dfin$ be the final
sets. Note that ${\Ref{\Rint} = \Ref{\Rfin}}$. Furthermore, ${\Rint \subseteq
P}$ so $\Rint$ is $\equals$-safe, and the transformation in
line~\ref{alg:relevance:UNA:desg} clearly preserves $\equals$-safety.

\medskip

($\Leftarrow$) For ${P_1 = \Rfin \cup \Ref{\Rfin} \cup \SymTrans}$, let ${I_0,
I_1, \dots}$ be the sequence of instances used to compute $\fixpoint{P_1}{B}$
as defined in Section~\ref{sec:preliminaries}. We show by induction $i$ that,
for each fact ${F \in I_i}$, we have ${P \cup \Ref{P} \cup \SymTrans \cup B
\models F}$. The base case is trivial. For the induction step, assume that the
claim holds for $I_i$, and consider applying a rule ${r \in \Rfin}$ with a
substitution $\sigma$ where ${\sigma(\head{r}) = F}$. Thus, we have
${\sigma(\body{r}) \subseteq I_i}$, and so ${P \cup \Ref{P} \cup \SymTrans \cup
B \models \sigma(\body{r})}$ holds by the inductive assumption. Rule ${r \in
\Rfin}$ is obtained from some rule ${r' \in \Rint \subseteq P}$ by
transformations in line~\ref{alg:relevance:UNA:desg}. Let $\sigma'$ be the
extension of $\sigma$ such that, for each variable $x$ that was replaced with a
term $t$, we set ${\sigma'(x) = \sigma(t)}$. Each body atom ${A' \in
\body{r'}}$ that was not removed in line~\ref{alg:relevance:UNA:desg}
corresponds to some ${A \in \body{r}}$ and ${\sigma'(A') = \sigma(A)}$ holds.
Moreover, since $P$ is $\equals$-safe, each body atom $A'$ of $r'$ that was
removed in line~\ref{alg:relevance:UNA:desg} is of the form ${x \equals y}$ or
${x \equals t}$, and variable $x$ occurs in some atom ${R_i(\dots,x,\dots) \in
\body{r'}}$; but then, the inductive assumption and the fact that $\Ref{\Rfin}$
contains the reflexivity rules for $B$ ensure ${P \cup \Ref{P} \cup \SymTrans
\cup B \models \sigma'(A')}$. Summarizing, we have ${P \cup \Ref{P} \cup
\SymTrans \cup B \models \sigma'(\body{r'})}$, and so ${P \cup \Ref{P} \cup
\SymTrans \cup B \models \sigma'(\head{r'})}$ holds, as required.

\medskip

($\Rightarrow$) For ${P_2 = P \cup \Ref{P} \cup \SymTrans}$, let ${I_0, I_1,
\dots}$ be the sequence of instances used to compute ${\fixpoint{P_2}{B}}$ as
defined in Section~\ref{sec:preliminaries}. We prove the claim in two steps.

First, we show by induction on $i$ that ${\eta(I_i) \subseteq I}$ holds. For
the base case, we have ${I_0 = B' \subseteq I}$, as required. Now assume that
${\eta(I_i) \subseteq I}$ holds, and consider an arbitrary fact ${F \in
I_{i+1}}$ derived by a rule ${r \in P'}$ and substitution $\sigma$; hence,
${\sigma(\body{r}) \subseteq I_i}$ holds. By the inductive assumption, we have
${\eta(\sigma(\body{r})) \subseteq \eta(I_i) \subseteq I}$. Moreover, $\eta$
does not affect the constants occurring in $P$, so ${\sigma'(\body{r})
\subseteq \eta(I_i) \subseteq I}$ holds where $\sigma'$ is the substitution
defined by ${\sigma'(x) = \eta(\sigma(x))}$ for each variable $x$ from the
domain of $\sigma$; consequently, we have ${\sigma'(\head{r}) \in I}$. Finally,
since $\eta$ does not affect the constants occurring in $P$, we have
${\sigma'(\head{r}) = \eta(F)}$, as required.

Second, we show by induction in $i$ that, for each fact ${F \in I_i}$ that is
not of the form ${t \equals t}$ and that satisfies ${\eta(F) \in \Dfin}$, we
have ${\Rfin \cup \Ref{\Rfin} \cup \SymTrans \cup B \models F}$.
Line~\ref{alg:relevance:init} of the algorithm ensures that ${\eta(\predQ(\vec
a)) \in \Dfin}$ holds for each ${\vec a}$ with ${\Rfin \cup \Ref{\Rfin} \cup
\SymTrans \cup B \models \predQ(\vec a)}$, so our claim holds.

The induction base holds trivially. Now assume that $I_i$ satisfies this
property and consider an arbitrary fact ${F \in I_{i+1}}$ derived by a rule ${r
\in P'}$ and substitution $\sigma$; hence, ${\sigma(\body{r}) \subseteq I_i}$
holds. Moreover, assume that $F$ is not of the form ${t \equals t}$; thus, ${r
\not\in \Ref{P}}$ and so ${r \in P \cup \SymTrans}$. Also, assume that
${\eta(F) \in \Dfin}$ holds; then ${\eta(F)}$ must have been added to
$\mathcal{T}$ in line~\ref{alg:relevance:add-T-D}, so it must have been
extracted from $\mathcal{T}$ at some point in
line~\ref{alg:relevance:choose-F}. Moreover, due to ${r \in P \cup \SymTrans}$,
rule $r$ was considered in line~\ref{alg:relevance:rule:start}. Finally, let
${r' = \sigma(r)}$; then ${\eta(\body{r'}) \subseteq I}$ holds by the previous
paragraph, which together with ${\eta(\head{r'}) = \eta(F)}$ and the fact that
$\eta$ does not affect the constants in $P$ ensures that substitution $\nu$
satisfying ${\nu(r) = \eta(r')}$ was considered in
line~\ref{alg:relevance:rule:start}. Consequently, $r$ is added to $\Rint$ in
line~\ref{alg:relevance:add-r}. We next prove the following property:
\begin{align}
    \Rfin \cup \Ref{\Rfin} \cup \SymTrans \cup B \models \body{r'}. \tag{$\ast$}
\end{align}

First, consider a ground atom ${F_i \in \body{r'}}$ not of the form ${t \equals
t}$. If atom $F_i$ is relational, or if $F_i$ contains function symbols, or if
${P \cup B}$ does not satisfy UNA, then $\eta(F_i)$ is not of the form ${c
\equals c}$ for $c$ a constant; hence, $\eta(F_i)$ added to $\Dfin$ in
line~\ref{alg:relevance:add-T-D}, and so the inductive assumption ensures
${\Rfin \cup \Ref{\Rfin} \cup \SymTrans \cup B \models F_i}$.

Second, consider a ground atom ${t \equals t \in \body{r'}}$. Rule $r$ is
$\equals$-safe, so atom ${t \equals t}$ is obtained from some ${A \in
\body{r}}$ of the form ${x \equals y}$ or ${x \equals t}$, and a relational
atom ${R_i(\vec t_i) \in \body{r}}$ exists such that ${\vars{A} \cap \vec t_i
\neq \emptyset}$. The previous paragraph ensures that ${\Rfin \cup \Ref{\Rfin}
\cup \SymTrans \cup B \models \sigma(R_i(\vec t_i))}$ holds; and ${r \in
\Rint}$ so $\Ref{\Rfin}$ contains all reflexivity rules for $R_i$; thus,
${\Rfin \cup \Ref{\Rfin} \cup \SymTrans \cup B \models t \equals t}$ holds.

Thus, property ($\ast$) holds. To complete the proof, we show that $\Rfin$
contains a rule that derives $F$. In particular, ${r \in \Rint}$ corresponds to
some rule ${r'' \in \Rfin}$ where the latter is obtained by transformations in
line~\ref{alg:relevance:UNA:desg}. Now consider an arbitrary body equality atom
${F_i \in \body{r'}}$. If $F_i$ is not of the form ${c \equals c}$ for some
constant $c$, then either ${P \cup B}$ does not satisfy UNA or $\eta(F_i)$ is
not of the form ${c \equals c}$ for $c$ a constant; hence, ${\langle r,i
\rangle}$ is added to $\mathcal{B}$ in line~\ref{alg:relevance:add-B}, and so
the body atom of $r''$ corresponding to $F_i$ is not eliminated in
line~\ref{alg:relevance:UNA:desg}. Thus, we have ${\sigma(\body{r''}) \subseteq
\body{r'}}$, so ${\Rfin \cup \Ref{\Rfin} \cup \SymTrans \cup B \models F}$.
\end{proof}

\subsection{Correctness of the Magic Transformation}

In line~\ref{alg:answer-query:magic} of Algorithm~\ref{alg:answer-query}, we
apply our variant of the magic sets algorithm to the result of the prior steps.
Note that both the input and output of this step are rules which may contain
equality atoms in both the head and the body. We need to prove not only that
this preserves the semantics of the input, but that it also preserves
$\equals$-safety. The latter guarantees that when, we remove the equalities in
the final two steps of the pipeline, we are left with a domain-independent set
of rules.

\thmmagic*

\begin{proof}
To see that $\mathcal{R}$ is $\equals$-safe, consider a rule ${r \in P \cup
\SymTrans}$ processed by the function $\mathsf{process}$. Since $r$ is
$\equals$-safe, the rule added to $\mathcal{R}$ in
line~\ref{alg:magic:mod-rule} for $r$ is clearly $\equals$-safe as well.
Moreover, the reordered atoms in line~\ref{alg:magic:reorder} satisfy the
condition from Definition~\ref{def:SIPS}:
\begin{itemize}
    \item each rule ${r \in P}$ can be ordered in such a way since $P$ is
    $\equals$-safe, and

    \item each rule ${r \in \SymTrans}$ can be ordered in such a way since
    $\ad{ff}$ is not a valid adornment of $\equals$, so ${\vec t^\beta}$ in
    line~\ref{alg:magic:reorder} contains at least one variable from the body
    of $r$.
\end{itemize}
Now consider a rule ${r' \in \mathcal{R}}$ computed from $r$ in
line~\ref{alg:magic:magic-rule}. For each atom ${R_i(\vec t_i) \in \body{r'}
\subseteq \body{r}}$ of the form ${x \equals y}$ or ${x \equals s}$, and for
${z \in \vars{\vec t_i}}$ the variable that satisfies the condition for
${R_i(\vec t_i)}$ in Definition~\ref{def:SIPS}, the $\equals$-safety of $r'$ is
ensured by ${\mgc{R}{\alpha}(\vec t^\beta) \in \body{r'}}$ if ${z \in T}$ for
${T = \vec t^\beta}$, and by ${R_j(\vec t_j) \in \body{r'}}$ if ${z \in \vec
t_j}$ for some ${j < i}$.

\medskip

We now proceed to prove that query answers remain preserved.

($\Leftarrow$) Each rule ${r' \in \mathcal{R}}$ where $\head{r'}$ does not
contain a magic predicate is obtained from some rule ${r \in P \cup \SymTrans}$
by appending an atom with a magic predicate to $\body{r}$. Thus, each
derivation from ${\mathcal{R} \cup \Ref{\mathcal{R}} \cup \SymTrans \cup B}$ of
a fact not containing a magic predicate corresponds directly to a derivation of
the fact from ${P \cup \Ref{P} \cup \SymTrans \cup B}$.

\medskip

($\Rightarrow$) Let ${P_1 = P \cup \Ref{P} \cup \SymTrans}$ and ${I =
\fixpoint{P_1}{B}}$, and let ${I_0, I_1, \dots}$ be the sequence of instances
used to compute $I$ as defined in Section~\ref{sec:preliminaries}. We prove by
induction on $i$ that, for each fact ${R(\vec s) \in I_i}$ not of the form ${s
\equals s}$, each magic predicate $\mgc{R}{\alpha}$ occurring in $\mathcal{R}$,
and each tuple of ground terms ${\vec s}$, the following two properties hold:
\begin{enumerate}
    \item ${\mathcal{R} \cup \Ref{\mathcal{R}} \cup \SymTrans \cup B \cup \{
    \mgc{\equals}{\adEqb}(\vec s^\ad{\beta}) \} \models {\equals}(\vec s)}$ for
    each ${\beta \in \{ \ad{bf}, \ad{fb} \}}$ if ${R = {\equals}}$ and ${\alpha
    = \adEqb}$;

    \item ${\mathcal{R} \cup \Ref{P'} \cup \SymTrans \cup B \cup \{
    \mgc{R}{\alpha}(\vec s^\alpha) \} \models R(\vec s)}$ if $R$ is \\ distinct
    from $\equals$ or ${\alpha \neq \adEqb}$.
\end{enumerate}
Line~\ref{alg:magic:init} ensures ${\mgc{\predQ}{\alpha} \leftarrow \; \in
\mathcal{R}}$ for ${\alpha = f \cdots f}$, so fact $\mgc{\predQ}{\alpha}$ is
derivable from $\mathcal{R}$ and $B$, and therefore, for each ground tuple of
terms ${\vec t}$ such that ${P \cup \Ref{P} \cup \SymTrans \cup B \models
\predQ(\vec t)}$, property 2 ensures ${\mathcal{R} \cup \Ref{\mathcal{R}} \cup
\SymTrans \cup B \models \predQ(\vec t)}$.

The base case holds trivially for each ${R(\vec s) \in I_0 = B}$. For the
induction step, assume that $I_i$ satisfies properties 1 and 2 and consider an
arbitrary fact ${R(\vec s) \in I_{i+1}}$ not of the form ${t \equals t}$
derived by a rule ${r \in P_1}$ and substitution $\sigma$; hence,
${\sigma(\body{r}) \subseteq I_i}$ and ${r \not\in \Ref{P}}$ hold. Moreover,
consider an arbitrary magic predicate $\mgc{R}{\alpha}$ occurring in
$\mathcal{R}$. Then, $\mgc{R}{\alpha}$ was extracted from $\mathcal{T}$ in
line~\ref{alg:magic:R:start}, and so rule ${r \in P \cup \SymTrans}$ was
considered in line~\ref{alg:magic:r:start}. Let ${R(\vec t)}$ be the head of
$r$; thus, we clearly have ${\sigma(\vec t) = \vec s}$.

Now consider ${\beta \in \{ \ad{bf}, \ad{fb} \}}$ if ${R = {\equals}}$ and
${\alpha = \adEqb}$, and let ${\beta = \alpha}$ otherwise. The algorithm calls
$\mathsf{process}(r,\alpha,\beta)$ in
lines~\ref{alg:magic:process:bf}--\ref{alg:magic:process:fb} or
line~\ref{alg:magic:process}. Let ${R_1(\vec t_1), \dots, R_n(\vec t_n)}$ be
the reordered body or $r$ from line~\ref{alg:magic:reorder}; as we have already
mentioned, such an ordering exists since $P$ is $\equals$-safe. We next prove
by another induction on ${1 \leq j \leq n}$ that the following property holds:
\begin{align}
    \mathcal{R} \cup \Ref{\mathcal{R}} \cup \SymTrans \cup B \cup \{ \mgc{R}{\alpha}(\vec s^\beta) \} \models \sigma(R_j(\vec t_j)).    \tag{$\ast$}
\end{align}    
The base case is the same as the induction step. Consider ${1 \leq j \leq n}$
such that the claim holds for each ${j' < j}$.

First, assume ${\sigma(R_j(\vec t_j))}$ is of the form ${t \equals t}$. Program
$P$ is $\equals$-safe, so $R_j(\vec t_j)$ is of the form ${x \equals y}$ or ${x
\equals t}$, and some ${z \in \vars{\vec t_j}}$ satisfies the condition of
Definition~\ref{def:SIPS}.
\begin{itemize}
    \item If ${z \in T}$ for ${T = \vec t^\beta}$, then ${z \in \vec t^\beta}$
    holds for the body atom of the form $\mgc{R}{\alpha}(\vec t^\beta)$ added
    in line~\ref{alg:magic:mod-rule} for $r$, so ${\sigma(z) = t \in \vec
    s^\beta}$. Moreover, $\mgc{R}{\alpha}(\vec s^\beta)$ holds by ($\ast$), and
    $\Ref{\mathcal{R}}$ contains the reflexivity rules for $\mgc{R}{\alpha}$,
    and one of them derives ${t \equals t}$.

    \item If ${z \in \vec t_{j'}}$ for some ${j' < j}$ and body atom
    $R_{j'}(\vec t_{j'})$ of $r$, then ($\ast$) ensures that
    $\sigma(R_{j'}(\vec t_{j'}))$ is derived. Moreover, $\Ref{\mathcal{R}}$
    contains the reflexivity rules for $R_{j'}$, and one of them derives ${t
    \equals t}$.
\end{itemize}

Second, assume ${\sigma(R_j(\vec t_j))}$ is not of the form ${t \equals t}$ and
that $R_j$ does not occur in the head of a rule in $P$. The rules from
${\Ref{\mathcal{R}} \cup \SymTrans}$ can derive only facts of the form ${t
\equals t}$, so therefore $R_j$ is different from $\equals$. But then we have
${\sigma(R_j(\vec t_j)) \in B}$, and thus ($\ast$) holds trivially.

Third, assume ${\sigma(R_j(\vec t_j))}$ is not of the form ${t \equals t}$ and
that $R_j$ is processed in
lines~\ref{alg:magic:body:start}--\ref{alg:magic:body:end}. Let $\gamma$ be the
adornment for ${R_j(\vec t_j)}$ in line~\ref{alg:magic:adorn} and let $S$ be
the magic predicate in lines~\ref{alg:magic:S:start}--\ref{alg:magic:S:end}.
The rule added to $P$ in line \ref{alg:magic:magic-rule} and ($\ast$) ensure
${\mathcal{R} \cup \Ref{\mathcal{R}} \cup \SymTrans \cup B \cup \{
\mgc{R}{\alpha}(\vec s^\beta) \} \models \sigma(S(\vec t_j^\gamma))}$, so the
inductive assumption for properties 1 and 2 ensures ${\mathcal{R} \cup
\Ref{\mathcal{R}} \cup \SymTrans \cup B \cup \{ \mgc{R}{\alpha}(\vec s^\beta)
\} \models \sigma(R_j(\vec t_j))}$.

This completes the proof of ($\ast$). To complete the proof of this theorem,
note that ($\ast$) and ${r \in \SymTrans}$ trivially imply property 1.
Moreover, if ${R = {\equals}}$ and ${\alpha = \adEqb}$, then ($\ast$) and the
rule added to $\mathcal{R}$ in line~\ref{alg:magic:mod-rule} ensure property 1.
Finally, in all other cases, ($\ast$) and the rule added to $\mathcal{R}$ in
line~\ref{alg:magic:mod-rule} ensure property 2.
\end{proof}

\subsection{Magic Transformation Preserves Chase Termination}

We next show that the magic sets transformation preserves finiteness of the
least fixpoint of the programs considered in our pipeline. Thus, applying the
chase for logic programs to the result is guaranteed to terminate, as mentioned
in Section~\ref{sec:magic}.

\thmtermination*

\begin{proof}
For $F$ a fact, let $\dep{F}$ be the depth of $F$ as usual (where constants
have depth zero). Let $M$ be the maximum depth of an atom in
${\fixpoint{P_1}{B}}$, and let ${I_0, I_1, \dots}$ be the sequence of instances
used to compute ${\fixpoint{P_2}{B}}$ as defined in
Section~\ref{sec:preliminaries}. We show by induction on $k$ that, for each
fact ${F \in I_k}$, we have ${\dep{F} \leq M}$. The base case is trivial, so
assume that the claim holds for some $I_k$ and consider an application of a
rule ${r \in P_2}$ to $I_k$ with some $\sigma$. If ${r \in \Ref{P} \cup
\SymTrans}$ or $r$ was added to $P_2$ in line~\ref{alg:magic:mod-rule}, then
$P_1$ contains a rule $r'$ such that ${\body{r'} \subseteq \body{r}}$; but
then, since $r'$ derives on $P_1$ and $B$ an atom of depth at most $M$, so does
$r$ on $P_2$ and $B$. Finally, assume that $r$ was added to $P_2$ in
line~\ref{alg:magic:magic-rule} for some $i$. By the inductive assumption, we
have ${\dep{\sigma(S(\vec t_i^\gamma))} \leq M}$ and ${\dep{\sigma(R_j(\vec
t_j))} \leq M}$ for ${1 \leq j < i}$. Moreover, the body of $r$ does not
contain function symbols, and so the head atom ${\head{r'} = S(\vec
t_i^\gamma)}$ of $r'$ does not contain a function symbol either, and all
variables in $\head{r'}$ occur in $\body{r'}$. Clearly, we have
${\dep{\sigma(\head{r'})} \leq M}$, as required.
\end{proof}

\subsection{Correctness of Eliminating Function Symbols from Bodies}

Next, line~\ref{alg:answer-query:defun} removes function symbols from the body
Algorithm~\ref{alg:answer-query} and thus prepares the result so that it can be
evaluated using the chase for logic programs. The correctness of this step is
easily verified.

\propdefun*

\begin{proof}
Let ${I_1 = \fixpoint{P_1}{B}}$ and ${I_2 = \fixpoint{P_2}{B}}$, where ${P_1 =
P \cup \Ref{P} \cup \Cong{P} \cup \SymTrans}$ and ${P_2 = P' \cup \Ref{P'} \cup
\SymTrans}$. By routine inductions on the construction of $I_1$ and $I_2$, one
can show that
\begin{displaymath}
    I_2 = I_1 \cup \{ \funpred{f}({\vec t, f(\vec t)}) \mid f(\vec t) \text{ occurs in } I_1 \text{ and } \funpred{f} \text{ occurs in } P' \}.
\end{displaymath}
Indeed, each derivation of a fact $F$ in $I_1$ by a rule ${r \in P_1}$
corresponds precisely to the derivation of $F$ and ${\funpred{f}(\vec t, f(\vec
t))}$ for each term ${f(\vec t)}$ occurring in $F$ in $I_2$ by the rules
obtained from $r$ by transformations in Definition~\ref{def:defun}; moreover,
each term occurring in some fact with predicate of the form $\funpred{f}$ also
occurs in a fact with a predicate not of the form $\funpred{f}$ and so the
reflexivity rules in ${\Ref{P'} \setminus \Ref{P}}$ do not derive any new
facts. Thus, ${P_1 \cup B}$ and ${P_2 \cup B}$ entail the same facts over
predicates occurring in $P$.
\end{proof}

\subsection{Correctness of Desingularization}

Line~\ref{alg:answer-query:desg} of Algorithm~\ref{alg:answer-query}
desingularizes the rules from the preceding steps, as described in Section
\ref{sec:final}, by removing all equalities in the rule bodies. If a rule body
$\gamma$ is desingularizing as $\gamma'$, then clearly $\gamma'$ is a logical
consequence of $\gamma$ and the congruence axioms for equality (i.e.,
axioms~\eqref{eq:congruence} from Section~\ref{sec:preliminaries}). Since we
wish to answer a query $\predQ$ in the presence of congruence axioms, the
desingularized rules clearly return all certain answers. The argument in the
other direction (i.e., that adding congruence axioms does not introduce
incorrect answers) is more subtle, and it requires analyzing the output of our
pipeline as a whole.

\thmdesg*

\begin{proof}
Let $\Sigma_1$ and $P_2$--$P_6$ be as specified in
Algorithm~\ref{sec:motivation}. Set $\Sigma_1$ and programs $P_2$--$P_6$ are
all $\equals$-safe, and thus desingularization correctly produces a program
where each variable in a rule also occurs in the body. Now let ${P' = P_6 \cup
\Ref{P_6} \cup \Cong{P_6} \cup \SymTrans}$ and consider a tuple of constants
${\vec a}$.

\medskip

($\Rightarrow$) Assume ${\Sigma \cup B \modelsEq \predQ(\vec a)}$. Then, ${P_5
\cup \Ref{P_5} \cup \SymTrans \cup B \models \predQ(\vec a)}$ holds by
(i)~Proposition~\ref{prop:sg}, capturing the properties of singularization,
(ii)~the properties of Skolemization in Section~\ref{sec:preliminaries},
(iii)~Theorems~\ref{theorem:relevance} and~\ref{theorem:magic}, capturing the
correctness of relevance analysis and the magic sets transformation, and
(iv)~Proposition~\ref{prop:defun}, capturing the correctness of removal of
function terms from rule bodies. Now let ${P'' = P_5 \cup \Ref{P_6} \cup
\Cong{P_6} \cup \SymTrans}$. Entailment in first-order logic is monotonic as
rules are added; moreover, we clearly have ${\Ref{P_5} = \Ref{P_6}}$ and
${\Cong{P_5} = \Cong{P_6}}$, so ${P'' \cup B \models \predQ(\vec a)}$ holds.
Let ${I_0, I_1, \dots}$ be the sequence of instances used to compute
$\fixpoint{P''}{B}$ as defined in Section~\ref{sec:preliminaries}. By induction
on $i$, we show that ${P' \cup B \models F}$ holds for each fact ${F \in I_i}$,
which implies our claim. The base case is obvious, so assume that this property
holds for $I_i$ and consider an application of a rule ${r \in P''}$ with
substitution $\sigma$ such that ${\sigma(\body{r}) \subseteq I_i}$. Let
$\sigma'$ be obtained from $\sigma$ by setting ${\sigma'(x) = \sigma(t)}$ for
each variable $x$ replaced with term $t$ when Definition~\ref{def:desg} is
applied to $P_5$. For each atom ${R(\vec t) \in \body{r}}$, the inductive
assumption ensures ${P' \cup B \models \sigma(R(\vec t))}$, and the congruence
rules in $\Cong{P_6}$ for $R_i$ clearly ensure ${P' \cup B \models
\sigma'(R(\vec t))}$. Thus, we have ${P' \cup B \models \sigma'(\head{r})}$.
Finally, the transformation from Definition~\ref{def:desg} does not affect
variables in the head of $r$, so we have ${\sigma(\head{r}) =
\sigma'(\head{r})}$.

\medskip

($\Leftarrow$) Let ${P'' = \sk{\Sigma} \cup \Ref{\sk{\Sigma}} \cup
\Cong{\sk{\Sigma}} \cup \SymTrans}$ be the program obtained by Skolemizing
$\Sigma$ and then axiomatizing equality. By the properties of Skolemization
from Section~\ref{sec:preliminaries}, we have ${\Sigma \cup B \modelsEq
\predQ(\vec a)}$ if and only if ${P'' \cup B \models \predQ(\vec a)}$. Let
${I_0, I_1, \dots}$ be the sequence of instances used to compute
$\fixpoint{P'}{B}$ as defined in Section~\ref{sec:preliminaries}. By induction
on $i$, we show that, for each fact ${R(\vec t) \in I_i}$ not of the form ${s
\equals s}$ and where $R$ occurs in $P''$ (i.e., $R$ is not a magic predicate
or of the form $\funpred{f}$), we have ${P'' \cup B \models R(\vec t)}$ . The
base case holds trivially, so assume that this property holds for $I_i$ and
consider an application of a rule ${r' \in P'}$ with substitution $\sigma$ such
that ${\sigma(\body{r'}) \subseteq I_i}$, the predicate of $\head{r'}$ occurs
in $P''$, and $\sigma(\head{r'})$ is not of the form ${s \equals s}$. Note that
$r'$ cannot be a reflexivity axiom (since such axioms always derive ${s \equals
s}$). If $r'$ is a congruence axiom, then we have ${r' \in
\Cong{\sk{\Sigma}}}$, so the property holds by inductive assumption. If ${r'
\in \SymTrans}$ and $\sigma(\head{r'})$ is not of the form ${s \equals s}$,
then $r'$ can derive a new fact only if $\sigma(\body{r'})$ does not contain a
fact of such a form, so the property again holds by the inductive assumption.
The only remaining possibility is ${r' \in P_6}$ with the predicate of
$\head{r}$ occurring in $P''$. Rule $r'$ is obtained by singularizing and then
Skolemizing an existential rule ${\tau \in \Sigma}$, possibly desingularizing
some equalities in the body during relevance analysis in
line~\ref{alg:relevance:UNA:desg} of Algorithm~\ref{alg:relevance}, adding some
magic atom ${\mgc{R}{\alpha}(\vec t^\beta)}$ to the body in
line~\ref{alg:magic:mod-rule} of Algorithm~\ref{alg:magic}, possibly
transforming away the function symbols from this magic atom, and finally
designularizing the remaining equalities. W.l.o.g.\ we can assume that all
desingularization steps ``undo'' the effects of the initial singularization
(i.e., all results of desingularization are unique up to variable renaming).
Moreover, Skolem terms contain only the variables occurring in the rule heads,
and these are not affected by singularization and the other transformations,
which modify only the bodies; thus, the heads of the rules are the same to what
would be produced by Skolemizing $\tau$ before singularization. Hence, there
exists a rule ${r'' \in P''}$ such that ${\head{r''} = \head{r'}}$ and
${\body{r''} \subseteq \body{r'}}$. Since the body of $r''$ does not contain
equality atoms, the inductive assumption ensures ${P'' \cup B \models
\sigma(\body{r''})}$, so we have ${P'' \cup B \models \sigma(\head{r''})}$, as
required.
\end{proof}

\subsection{Final correctness argument}

The final step of Algorithm~\ref{alg:answer-query} is Step
\ref{alg:answer-query:chase}, which simply applies the chase for logic programs
to the output of the prior steps.

\thmalgcorrect*

The correctness claim of Theorem \ref{thm:query-answering} follows from
combining Proposition~\ref{prop:chasecorrect}, the properties of Skolemization
form Section~\ref{sec:preliminaries}, and all the results proved in this
section. For the termination claim, if the chase of $\sg{\Sigma}$ terminates on
every base instance, then line~\ref{alg:relevance:abstraction-fixpoint} of
Algorithm~\ref{alg:relevance} necessarily terminates.

}{}

\end{document}